\newcommand{\ignore}[1]{}
\setlist{itemsep=-3pt}
\definecolor{orcidlogocol}{HTML}{A6CE39}
\newtheorem{lemma}{Lemma}
\newtheorem{theorem}{Theorem}
\newcommand{\A}{{\cal A}}
\newcommand{\B}{{\cal B}}
\newcommand{\E}{{\bf E}}
\newcommand{\Lin}{{\rm Lin}}
\newcommand{\al}{\allowbreak}
\newcommand{\Ex}[2]{\underset{#1}{\bf{E}}\left[#2\right]}
\newcommand{\F}{\mathbb{F}}
\newcommand{\orcid}[1]{\href{https://orcid.org/#1}{\textcolor[HTML]{A6CE39}{\aiOrcid}}}
\author{
  Nader H. Bshouty\\
  \texttt{Department of Computer Science, Technion, Israel.}\\ 
  \href{mailto:bshouty@cs.technion.ac.il}{bshouty@cs.technion.ac.il} 
  \orcidlink{0009-0007-7356-7824}
  \and
  George Haddad\\
  \texttt{Department of Computer Science, Technion, Israel.}\\ \href{mailto:haddadgeorge@campus.technion.ac.il}{haddadgeorge@campus.technion.ac.il}
  \orcidlink{0009-0002-6142-3560}
}
\date{}
\title{Approximating the Number of Relevant Variables in a Parity Implies Proper Learning}
\begin{document}

\maketitle




\begin{abstract} 
Consider the model where we can access a parity function through random uniform labeled examples in the presence of random classification noise. In this paper, we show that approximating the number of relevant variables in the parity function is as hard as properly learning parities.

More specifically, let $\gamma:{\mathbb R}^+\to {\mathbb R}^+$, where $\gamma(x) \ge x$, be any strictly increasing function. In our first result, we show that from any polynomial-time algorithm that returns a $\gamma$-approximation, $D$ (i.e., $\gamma^{-1}(d(f)) \leq D \leq \gamma(d(f))$), of the number of relevant variables~$d(f)$ for any parity $f$, we can, in polynomial time, construct a solution to the long-standing open problem of polynomial-time learning $k(n)$-sparse parities (parities with $k(n)\le n$ relevant variables), where $k(n) = \omega_n(1)$. 

In our second result, we show that from any $T(n)$-time algorithm that, for any parity $f$, returns a $\gamma$-approximation of the number of relevant variables $d(f)$ of $f$, we can, in polynomial time, construct a $poly(\Gamma(n))T(\Gamma(n)^2)$-time algorithm that properly learns parities, where $\Gamma(x)=\gamma(\gamma(x))$.

If $T(\Gamma(n)^2)=\exp({o(n/\log n)})$, this would resolve another long-standing open problem of properly learning parities in the presence of random classification noise in time~$\exp({o(n/\log n)})$. 
\end{abstract}
\pagebreak
\section{Introduction}

The problem of PAC learning parity, with and without noise, and approximating its sparsity has been extensively studied in the 
literature. See ~\cite{AustrinK14,BhattacharyyaGG16,BhattacharyyaGR15,BhattacharyyaGR20,BhattacharyyaIWX11,Blum96,BlumFKL93,BlumKW03,Bshouty18,BshoutyH98,BuhrmanGM10,ChengW10,ChengW12,DowneyFVW99,DumerMS03,Feldman07,FeldmanGKP09,GrigorescuRV11,GuijarroLR99,GuijarroTT98,Hofmeister99,KalaiMV08,KlivansS04b,Lyubashevsky05,Micciancio14,UeharaTW97,Valiant15,Vardy97} and references therein. 

In properly learning parities under the uniform distribution, the learner can observe labeled examples $\{(a_i,b_i)\}_i$, where $b_i=f(a_i)$, $a_i$ are drawn independently from the uniform distribution, and $f$ is the target parity. The goal is to return the target parity function $f$ exactly. 

In the random classification noise model with noise rate $\eta$, \cite{AngluinL87}, each label $b_i$ is independently flipped (misclassified) with probability $\eta$. 
The problem of learning parities with noise (LPN) is known to be computationally challenging. Some evidence of its hardness comes from the fact that it cannot be learned efficiently in the so-called statistical query (SQ) model~\cite{Kearns98} under the uniform distribution~\cite{BlumFJKMR94,BlumKW03}. LPN serves as the foundation for several cryptographic constructions, largely because its hardness in the presence of noise is assumed. See for example~\cite{BlumFKL93,Regev24}. 

While PAC learning of parities (and thus determining its sparsity) under the uniform distribution can be accomplished in polynomial time using Gaussian elimination, addressing this problem in the presence of random classification noise remains one of the most long-standing challenges in learning theory. 
The only known algorithm is that of Blum et al.\cite{BlumHL95}, which runs in time $2^{O(n/\log n)}$, requires $2^{O(n/\log n)}$ labeled examples, and handles only a constant noise rate. 
This algorithm holds the record as the best-known solution for this problem. Finding a $2^{o(n/\log n)}$-time learning algorithm for parities or proving the impossibility of such an algorithm remains a significant and unresolved challenge.

When the number of relevant variables\footnote{A variable is {\it relevant} in $f$ if $f$ depends on that variable.} $k$ of the parity function $f$ is known ($f$ is called $k$-sparse parity), all the algorithms proposed in the literature run in time $n^{ck}$ for some constant $c<1$, \cite{BhattacharyyaGR15,BhattacharyyaIWX11,GrigorescuRV11,Valiant15,YanYLZZ21}. 
Finding a polynomial-time algorithm for $k$-sparse parities for some $k=\omega(1)$, or proving the impossibility of such an algorithm, is another significant and unresolved challenge. 

In a related vein, another challenging problem is determining or approximating the sparsity of the parity function, i.e., the number of relevant variables in the target function. 
This problem was studied in the PAC-learning model \cite{Valiant84} under specific\footnote{Some of the problems are introduced as follows: 
Given a matrix $M\in F_2^{m\times n}$, a vector $b\in \{0,1\}^m$, and an integer $k$. 
Deciding if there exists a weight $k$ vector $x\in \{0,1\}^n$ such that $Mx=b$. 
This is equivalent to the decision problem when the distribution is uniform over the rows of $M$.} distributions~\cite{AustrinK14,BhattacharyyaBE21,BhattacharyyaGG16,BhattacharyyaIWX11,ChengW10,ChengW12,DowneyFVW99,DumerMS03,Feldman07,Micciancio14,Vardy97}. 

For the problem of determining the sparsity under any distribution and without noise, Downey et al. \cite{DowneyFVW99} and Bhattacharyya et al.~\cite{BhattacharyyaGG16} show that determining the sparsity $k$ of parities is $W[1]$-hard. 
Bhattacharyya et al. \cite{BhattacharyyaIWX11} show that the time complexity is $\min(2^{\Theta(n)},n^{\Theta(k)})$, assuming $3$-SAT has no $2^{o(n)}$-time algorithm. 
For the problem of approximating the sparsity, Dumer et al.~\cite{DumerMS03} showed that if RP$\not=$NP, then it is hard to approximate the sparsity within some constant factor $\gamma>1$. 
See also~\cite{AustrinK14,ChengW10,ChengW12,Micciancio14}. 
When the distribution is uniform, there is a polynomial-time algorithm that uses $O(n)$ labeled examples and learns parities using Gaussian elimination, thereby determining their sparsity.

In this paper, we pose the question: Can we approximate the sparsity of the parity function in polynomial time using random uniform labeled examples in the presence of random classification noise? We show that approximating the number of relevant variables in the parity function is as hard as properly learning parities. 

We first show the following. 
\begin{theorem}\label{TT1}
    Let $\gamma:{\mathbb R}^+\to {\mathbb R}^+$ be {\it any} strictly increasing function, where $\gamma(x)\ge x$. 
Consider a polynomial-time algorithm that, 
for any parity $f$, uses random uniform labeled examples of $f$ in the presence of random classification noise and returns an integer~$D$ such that\footnote{See Section~\ref{Justification} for the justification of why we use this definition and not the standard definition $d(f) \le D \le \gamma(d(f))$.} $\gamma^{-1}(d(f))\le D\le \gamma(d(f))$, where $d(f)$ is the number of relevant variables in $f$. One can, in polynomial time, construct an algorithm that runs in polynomial time, uses random uniform labeled examples in the presence of random classification noise, and learns $k(n)$-sparse parities for some\footnote{Throughout this paper, we also have $k=n-\omega(1)$. For $k=O(1)$ and $k=n-O(1)$, there are polynomial-time learning algorithms.} $k(n)=\omega_n(1)$.
\end{theorem}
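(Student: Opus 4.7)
My plan begins with the following label-XOR observation: from noisy examples of $f$ and any known parity $g$, one can produce noisy examples of $f\oplus g$ at the same noise rate by XOR-ing $g(x)$ into each label. Consequently the approximator $A$ may be invoked on $f\oplus g$ for any $g$ of our choice, and it acts as an oracle that, on input $u\in\{0,1\}^n$, returns an integer $D(u)$ satisfying $\gamma^{-1}(|v_f\oplus u|)\le D(u)\le\gamma(|v_f\oplus u|)$, where $v_f\in\{0,1\}^n$ is the (unknown) characteristic vector of $f$. Thus learning a $k$-sparse parity reduces to recovering the weight-$k$ vector $v:=v_f$ from an approximate-Hamming-distance oracle; write $R\subseteq[n]$ for the support of $v$, i.e.\ the $k$ relevant variables.

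Next I would query $D(0)$ to learn $k=|v|$ up to a factor $\gamma^2$, then design probes of the form $u=\chi_S:=\bigoplus_{i\in S}x_i$ whose distances to $v$ span a wide multiplicative range. Since $|v\oplus\chi_S|=k+|S|-2|R\cap S|$ varies between $|\,k-|S|\,|$ and $k+|S|$, choosing $|S|$ comparable to $k$ yields the full range $[0,2k]$ --- ample contrast for the approximator to distinguish coarse bins of $|R\cap S|$. The algorithm is then a localization procedure: starting from $[n]$, I would use such probes (together with random partitioning or a hashing scheme) to carve out a smaller candidate region that still provably contains $R$, and recurse inside that region. Once, after $O(\log n)$ rounds, the candidate region has polynomially bounded size, I would enumerate the remaining weight-$k$ vectors $u$ and apply the exact test $D(u)=0 \iff u=v$ (the equality $D(u)=0$ holds deterministically since $\gamma(0)=0$) to pin down $v$.

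The hard part will be calibrating the localization so that it runs in polynomial time for \emph{some} $k(n)=\omega(1)$, despite $A$'s resolution being only a factor $\gamma^2$. Concretely, the probe sizes and partition structure must be chosen so that the sparsities of the two alternatives one wants to distinguish differ by more than $\gamma^2$, and the recursion depth must stay logarithmic so that the total number of oracle calls is polynomial. I expect $k(n)$ has to be taken as a very slowly growing function --- fast enough that $\omega(1)$ relevant variables can be recovered within $O(\log n)$ levels of recursion but slow enough to tolerate the coarseness induced by $\gamma$. Handling the noise at each oracle call by median-of-many amplification and verifying that the noisy sample stream survives the label-XOR transformations are standard steps and are not where the difficulty lies.
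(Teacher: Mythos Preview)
Your label-XOR observation is correct and is indeed the starting point of the paper's argument, but the localization scheme you outline cannot be made to work for arbitrary $\gamma$, and the difficulty you flag as ``the hard part'' is in fact fatal to this approach. A single call to the approximator on a function of sparsity $m$ pins $m$ down only to the interval $[\gamma^{-1}(m),\gamma(m)]$; two sparsities $m_1<m_2$ are distinguishable only when $\gamma(\gamma(m_1))<m_2$. In every localization step you must separate values of $|v\oplus\chi_S|=k+|S|-2|R\cap S|$ that differ by at most $2\min(k,|S|)$ while themselves having magnitude at least $\bigl|\,|S|-k\,\bigr|$; already for $\gamma(x)=2x$ (let alone $\gamma(x)=x^2$ or $\gamma(x)=2^x$) no choice of $|S|$ makes these separable unless one of the two values is essentially $0$, and driving the sparsity to $0$ requires knowing almost all of $v$ in advance. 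Median-of-many amplification does not help: the obstacle is the worst-case guarantee, not randomness, since on any fixed input the approximator may consistently return the same uninformative value inside its allowed interval. (Your endgame test ``$D(u)=0\Longleftrightarrow u=v$'' also assumes $\gamma(0)=0$, which is not among the hypotheses.)

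The idea you are missing is the paper's use of averaging \emph{over targets} rather than over repeated calls on a fixed target. Define $\Psi_\A(d)=\E_{f\sim\Lin(d)}[\A(f)]$, the expected output of the approximator when the input is a uniformly random $d$-sparse parity. This is a deterministic function of $d$ satisfying $d\le\Psi_\A(d)\le\gamma(\gamma(d))$, so telescoping over any window $[m,\gamma(\gamma(m))]$ forces some $k$ in that window to satisfy $\Psi_\A(k+1)-\Psi_\A(k-1)\ge 1/n$. One can tabulate $\Psi_\A$ to additive accuracy $1/\mathrm{poly}(n)$ by simulation, and---crucially---one can estimate $\Psi_\A(d(f+ x_i))$ for the \emph{unknown} target $f$ by composing each labeled example with a random permutation so that $f+ x_i$ becomes a uniformly random member of its sparsity class. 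For that particular $k$ the single-coordinate probe then separates relevant from irrelevant variables. The point is that averaging over random targets converts the coarse multiplicative guarantee on individual calls into a fine additive gap in $\Psi_\A$ at a well-chosen $k$; your scheme never exploits this and therefore cannot overcome the $\gamma$-coarseness.
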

 
This would solve the long-standing open problem of polynomial-time learning $k$-sparse parities for some $k = \omega_n(1)$.

We then show that 
\begin{theorem}\label{TT2}
   From any $T(n)$-time algorithm that, for any parity~$f:\{0,1\}^n\to \{0,1\}$, uses $Q(n)$  random uniform labeled examples of $f$ in the presence of random classification noise and returns a $\gamma$-approximation of the number of relevant variables $d(f)$ of $f$, one can, in polynomial time, construct a $poly(\Gamma(n))T(\Gamma(n)^2)$-time algorithm that uses $poly(\Gamma(n))Q(\Gamma(n)^2)$ random uniform labeled examples in the presence of random classification noise and properly learns parities, where $\Gamma(x)= \gamma(\gamma(x))$. 
\end{theorem}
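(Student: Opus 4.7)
The plan is to reduce proper learning of a parity $f$ to $n$ parallel invocations of the $\gamma$-approximation algorithm, where the $i$th invocation decides whether variable $x_i$ is relevant in $f$. The key gadget is the following. Set $N := \lceil \Gamma(n)\rceil + 1$, and for each $i\in[n]$ define the parity
\[
f_i(x_1,\ldots,x_n,y_1,\ldots,y_N) := f\bigl(x_1,\ldots,x_{i-1},\, x_i\oplus y_1\oplus\cdots\oplus y_N,\, x_{i+1},\ldots,x_n\bigr).
\]
Writing $f(x)=\bigoplus_{j\in S}x_j$, one reads off that $d(f_i)=d(f)+N$ when $i\in S$ and $d(f_i)=d(f)$ when $i\notin S$, so the gadget plants a sparsity gap of exactly $N$.

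Next I would verify that noisy examples of $f_i$ can be produced from noisy examples of $f$ without altering the noise rate. Given a noisy example $(a,b)$ of $f$, draw $y\in\{0,1\}^N$ uniformly, set $x_j:=a_j$ for $j\ne i$ and $x_i:=a_i\oplus y_1\oplus\cdots\oplus y_N$, and emit $((x,y),b)$. The map $(a,y)\mapsto(x,y)$ is a bijection of $\{0,1\}^{n+N}$, so $(x,y)$ is uniform; and the identity $x_i\oplus y_1\oplus\cdots\oplus y_N=a_i$ yields $f_i(x,y)=f(a)$, so $b=f_i(x,y)\oplus\eta$ is a noisy label with the same rate $\eta$. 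Hence $Q(n+N)$ noisy examples of $f$ turn into $Q(n+N)$ independent noisy examples of $f_i$, for any $i$.

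The reduction then runs, for each $i\in[n]$, the approximation algorithm on $f_i$ with its success probability amplified to $1-1/(3n)$ (majority vote over $O(\log n)$ independent invocations), obtaining $D_i$, and declares $x_i$ relevant iff $D_i>\gamma(n)$; finally it outputs the parity of the variables declared relevant. For correctness: if $x_i$ is irrelevant then $d(f_i)=d(f)\le n$, so $D_i\le\gamma(d(f))\le\gamma(n)$; if $x_i$ is relevant then $d(f_i)\ge \Gamma(n)+1$, so
\[
D_i \ge \gamma^{-1}(d(f_i)) \ge \gamma^{-1}(\Gamma(n)+1) > \gamma^{-1}(\Gamma(n)) = \gamma(n),
\]
using strict monotonicity of $\gamma$ together with $\Gamma=\gamma\circ\gamma$. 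A union bound over the $n$ tests yields overall success probability at least $2/3$, which standard repetition boosts arbitrarily.

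For the resource bounds, each call runs on $n+N\le 2\Gamma(n)+2\le \Gamma(n)^2$ variables (using $\Gamma(n)\ge n\ge 2$, which follows from $\gamma(x)\ge x$), costing $T(\Gamma(n)^2)$ time and $Q(\Gamma(n)^2)$ samples; the $n\cdot O(\log n)=\mathrm{poly}(\Gamma(n))$ total runs deliver the claimed $\mathrm{poly}(\Gamma(n))\,T(\Gamma(n)^2)$ and $\mathrm{poly}(\Gamma(n))\,Q(\Gamma(n)^2)$ bounds. The main obstacle I expect to spend effort on is the quantitative choice of $N$: the planted gap must be wide enough to survive being squeezed from above by $\gamma$ on the ``irrelevant'' side and from below by $\gamma^{-1}$ on the ``relevant'' side, which forces $N=\Theta(\Gamma(n))$ and hence the $\Gamma(n)^2$ scaling in the final running time. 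Noise preservation, the other natural worry, is handled cleanly by the bijection argument because the $y$-coordinates enter $f_i$ only through substitution into an existing coordinate and thus never touch the Bernoulli bit attached to an example.
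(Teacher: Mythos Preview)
Your argument is correct and considerably more direct than the paper's. The paper does not test each variable against a single planted gap; instead it builds the averaged quantity $\Psi_{\A}(d)=\E_{f\sim\Lin(d)}[\A(f)]$, argues by telescoping that some $k$ in every window $[m,\Gamma(m)]$ satisfies $\Psi_{\A}(k{+}1)-\Psi_{\A}(k{-}1)\ge 1/n$, uses that gap to learn $k$-sparse parities (by comparing $\Psi_{\A}(d(f\oplus x_i))$ to $\Psi_{\A}(k\pm 1)$), then reduces an arbitrary $d$-sparse parity to a $k$-sparse one by adjoining $k-d$ fresh monomials, and finally pads the instance to $\Theta(\Gamma(n)^2)$ variables so that every $d\le n$ falls under some such $k$. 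Your gadget bypasses all of this machinery: by XOR-ing $N\approx\Gamma(n)$ fresh variables into coordinate $i$ you manufacture, for each $i$ separately, a sparsity gap wide enough that a single (amplified) call to the approximator already separates the relevant and irrelevant cases. What the paper's route buys is that the same $\Psi_{\A}$ framework also yields Theorem~\ref{TT1} and generalises cleanly to non-binary fields where coefficients, not just supports, must be recovered; what your route buys is a shorter proof and in fact a sharper bound, since you only need $n+N=O(\Gamma(n))$ variables rather than $\Theta(\Gamma(n)^2)$.

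Two minor technical nits. First, ``majority vote'' should be ``median'' when amplifying an interval-membership guarantee (cf.\ the paper's Lemma~\ref{BoostA}). Second, the inequality $2\Gamma(n)+2\le\Gamma(n)^2$ requires $\Gamma(n)\ge 3$, not $\Gamma(n)\ge 2$; since $\gamma(x)>x$ strictly this holds for all sufficiently large $n$, and the finitely many remaining $n$ can be handled by brute force (or simply by writing $T(O(\Gamma(n)))$ in place of $T(\Gamma(n)^2)$, which is anyway stronger).
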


If $T(\Gamma(n)^2)=\exp({o(n/\log n)})$, this would resolve another long-standing open problem of proper learning parities in the presence of random classification noise in time~$\exp({o(n/\log n)})$. 
This is applicable, for example, for any poly$(\cdot)$-approximation and $\exp({n^{1/c}})$-time algorithm for some sufficiently large constant $c$. As well as to quasi-poly$(\cdot)$-approximation and $\exp(\exp(\al{(\log n)^{1/c}}))$-time algorithm for some sufficiently large constant $c$.

In this paper, while the above discussions and the technique section have been primarily focused on parities, that is, linear functions over the binary field $\mathbb{F}_2$, the results we present in this paper are not limited to this specific case. We generalize our result to encompass any linear function over any finite field. This extension allows our results to be applicable to a broader range of linear systems beyond the binary paradigm, effectively widening their relevance in coding theory and cryptography.

\subsection{Our Technique}
In this section, we present the technique used in the paper to prove the results in Theorem~\ref{TT1} and~\ref{TT2}.

For learning in the presence of random classification noise, when the noise rate $\eta=1/2$, the labels will be randomly uniform, and learning is impossible. Therefore, we must assume that the learner knows some upper bound $\eta_b<1/2$ for $\eta$~\cite{AngluinL87}.
\subsection{Approximation Implies Learning $k$-Sparse Parities}
In this section, we present two approaches that prove Theorem~\ref{TT1}. The first is our method, and the second was suggested by an anonymous reviewer of RANDOM.

While the approach suggested by the anonymous reviewer is truly inspiring, we believe that our method offers significant value, is worth presenting in this paper, and may be useful for solving other problems. 

\subsubsection{First Approach}\label{MyAlg}
In this section, we will outline the technique for the binary field, though some essential details are omitted to provide a broader overview of the main concepts and approach. Additionally, proving the result for any field requires more careful treatment.

Let $\gamma:{\mathbb R}^+\to {\mathbb R}^+$ be any strictly increasing function such that for every\footnote{We need this constraint to ensure that $\gamma^{-1}(x)<\gamma(x)$ for every $x>1$.} $x>1$, $\gamma(x)> x$.  

Let $\A$ be a polynomial-time randomized algorithm that $\gamma$-approximates the number of relevant variables $d(f)$ in a parity $f$, using random uniform labeled examples of $f$ in the presence of random classification noise with any noise rate\footnote{Here, $\eta$ is not known to the algorithm, but $\eta_b$ is known.} $\eta\le \eta_b$. 
Thus, for every parity $f$ with $d(f)$ relevant variables, with probability at least $1-\delta$ we have $\gamma^{-1}(d(f))\le \A(f)\le \gamma(d(f))$. 
We will demonstrate how to construct a polynomial-time learning algorithm for $k(n)$-sparse parities, for some $k(n) = \omega_n(1)$.
First, we will show how to find $k(n)$. 

Let $\Lin(d)$ be the class of $d$-sparse parities. Assume for now that the noise rate is $\eta_b$. Later, we will show how to modify the algorithm to work for any unknown noise rate $\eta\le \eta_b$.

We first use the algorithm $\A$ to construct a table that provides values which approximate $$\Psi_\A(d)=\Ex{(f,s)\sim_u \Lin(d)\times S(f)}{\A(f)}$$
with additive error of $1/poly(n)$, for every $d\in [n]$ and noise rate $\eta_b$. 
Here $f$ is a $d$-sparse parity chosen uniformly at random from $\Lin(d)$, and $s$ is a uniformly random string in $S(f)$ - the set of random bits used by the algorithm (for the randomness of the algorithm and the noise) for which the algorithm returns a correct answer, namely, returns~$D$ such that $\gamma^{-1}(d)\le D\le \gamma(d)$. 

To approximate $\Psi_\A(d)$ for some $d\in [n]$, we iterate a polynomial number of times. 
At each iteration, we draw a random uniform $f\in \Lin(d)$ and run $\A$. For each labeled example requested by $\A$, we draw a random uniform $u\in \{0,1\}^n$ and compute $v=f(u)$. 
We then, with probability $\eta_b$, return\footnote{Here, $+$ is exclusive or.} $(u,v+1)$ to $\A$, and, with probability $1-\eta_b$, return $(u,v)$. If the algorithm outputs an integer $D$ such that $\gamma^{-1}(d)\le D\le \gamma(d)$, we retain that $D$. Otherwise, we repeat the process. 
Obviously, $\E[D]=\Psi_\A(d)$, and therefore, using Hoeffding's bound, such a table can be constructed in polynomial time. 

Now, using the fact that $\gamma$ is strictly increasing and $\gamma^{-1}(d)\le \Psi_\A(d)\le \gamma(d)$, and applying a basic averaging argument, we show that there exists a $k:=k(n)=\omega_n(1)$ for which $\Psi_\A({k+1})-\Psi_\A({k-1})\ge 1/poly(n)$. We now show how to learn $k$-sparse parities with noise rate $\eta_b$ in polynomial time and afterward for any $\eta\le \eta_b$.

Suppose that the target function $f\in \Lin(k)$ can be accessed through random uniform labeled examples in the presence of random classification noise with noise rate $\eta_b$. We first show how to approximate $\Psi_\A(d(f(x)+x_i))$ for any $i\in [n]$ without knowing $f$. Recall that $d(f(x)+x_i)$ is the number of relevant variables in $f(x)+x_i$. 
The key idea here is that if $(a,b)$ is a labeled example of~$f$, then for a random uniform permutation $\phi$, $((a_{\phi^{-1}(1)},\ldots,a_{\phi^{-1}(n)}),b+a_{i})$ is a labeled example of the function $f(x_{\phi(1)},\ldots,x_{\phi(n)})+x_{\phi(i)}$ which is a random and uniform drawn function in $\Lin(d(f(x)+x_i))$. 
Therefore, using Hoeffding's Bound, we can approximate $\Psi_\A(d(f(x)+x_i))$ for every $i\in [n]$. 

Now, $x_i$ is relevant in $f(x)$ if and only if $f(x)+x_i\in \Lin(k-1)$ and then $\Psi_\A(d(f(x)+x_i))=\Psi_\A(k-1)$. 
On the other hand, $x_i$ is not relevant in $f(x)$ if and only if $f(x)+x_i\in \Lin(k+1)$, and then $\Psi_\A(d(f(x)+x_i))=\Psi_\A(k+1)$. 
Since $\Psi_\A({k+1})-\Psi_\A({k-1})\ge 1/poly(n)$, these two cases are distinguishable in polynomial time. Consequently, we can differentiate between variables in $f$ that are relevant and those that are not. This gives the learning algorithm to~$\Lin(k)$ when the noise rate is $\eta_b$.   

This algorithm runs in time $T=poly(n,1/(1-2\eta_b))$. When $\eta$ is not known, we can run the above procedure for all possible values $\eta^{(j)}=1/2-j/T^c$, where $c$  is a sufficiently large constant, and $j\in [T^c/2]\cup\{1\}$. 
For each $j$, when the algorithm receives a labeled example $(u,v)$, we magnify the error rate to $\eta_b$ by drawing $\xi\in\{0,1\}$, which is equal to $1$ with probability $(\eta_b-\eta^{(j)})/(1-2\eta^{(j)})$, and returning $(u,v+\xi)$ to the algorithm. This new labeled example has noise rate $\eta_b$.
We collect all the $T^c/2+1$ hypotheses generated from the outputs and then employ a standard algorithm to select the one closest to the target~\cite{AngluinL87}. 
The result follows because there exists a $j$ such that\footnote{If $\eta_j=\eta+\epsilon$ then the magnified noise is $\eta_b+\lambda\epsilon$ where $\lambda=(\eta+\eta_b-1+\epsilon)/(1-2(\eta+\epsilon))$.} $|\eta^{(j)}-\eta|\le 1/T^c$. Consequently, using the total variation distance, one of the hypotheses is the target.

In this paper, we extend our result to any linear function over any finite field $\F$. The approach used is similar to the case of parities (the binary field $\F=\{0,1\}$) with some technical but nontrivial modifications.

\subsubsection{The Second Approach}
This second approach was suggested by an anonymous reviewer of RANDOM, whose insightful comments and suggestions significantly improved this manuscript for the case of the binary field. For non-binary fields, this approach can identify the relevant variables of the function. We then use the approach developed in Lemma~\ref{Existd} and Lemma~\ref{PolyLessLog} to find the coefficients of the relevant variables.

Suppose there exists a randomized algorithm $\A(n)$ that runs in time $T(n)$ and $\gamma$-approximates the number of relevant variables in a parity function $f:\{0,1\}^n\to \{0,1\}$, using random uniform labeled examples of $f$ in the presence of random classification noise with a noise rate $\eta\le\eta_b$. Here, too, the algorithm needs to know an upper bound on $\eta$. We will explain the reasons for this below. 

Let $k_1=\omega_n(1)$ be an integer such that $k_2=\gamma(\gamma(k_1)+1)=n-\omega_n(1)$. For any $k_1$-sparse parity $f$, the algorithm outputs $\A(f)\in [\gamma^{-1}(k_1),\gamma(k_1)]$, and for any $k_2$-sparse parity function $g$, it outputs $\A(g)\in [\gamma(k_1)+1,\gamma(\gamma(\gamma(k_1)+1))]$. Since the two intervals are disjoint, the algorithm can distinguish between $k_1$-sparse parities and $k_2$-sparse parities in polynomial time. Let $\B$ be the polynomial-time algorithm that distinguishes between them. 

Consider the algorithm $\B$ when it runs on random uniform examples with random uniform labels. Suppose that with probability $p$, the algorithm answers that the function is a $k_1$-sparse parity, and with probability $1-p$, it answers that it is a $k_2$-sparse parity. 
If $p>1/2$, then with probability at least $1/2$, $\B$ can distinguish between $k_2$-sparse parities and random uniform examples with random uniform labels. Otherwise, with probability at least $1/2$, $\B$ can distinguish between $k_1$-sparse parities and random uniform examples with random uniform labels. 

Suppose, without loss of generality, the latter holds. We now give an algorithm that finds the relevant variables when the target function is a $k_1$-parity function and, consequently, learns $k_1$-sparse parities. This algorithm is from~\cite{BlumFKL93}.

For every $i\in [n]$, we run the algorithm $\B$ and change the $i$-th coordinate of each example to a random uniform element in $\{0,1\}$. 
If $x_i$ is not a relevant variable of $f$, then the labeled examples are labeled examples of $f$, and the algorithm answers that it is a $k_1$-parity function. 
If $x_i$ is a relevant variable of $f$, then it is easy to see that the new labeled examples are random uniform with random uniform labels, and the algorithm answers accordingly. This distinguishes between variables in $f$ from those that are not in $f$.

In this method, too, we must know some upper bound on $\eta$. Otherwise, algorithms $\A$ and $\B$ would need to be Las Vegas algorithms, and we would not know when to stop the algorithm when dealing with random uniform examples with random uniform labels.

For the problem of finding the relevant variables of the target in other fields, the generalization of this to any field is straightforward. 

\subsection{Approximation Implies Learning Parities}
In this section, we show how $\gamma$-approximation implies proper learning parities.

Let $\gamma(x)$ be any strictly increasing function.
Suppose there exists a randomized algorithm $\A(n)$ that runs in time $T(n)$ and $\gamma$-approximates the number of relevant variables in a parity $f:\{0,1\}^n\to \{0,1\}$, using random uniform labeled examples of $f$ in the presence of random classification noise. 

Let $\Gamma(x)=\gamma(\gamma(x))$. As in Section~\ref{MyAlg}, using a basic averaging argument, we show that there exists a sequence of integers $k_1 < k_2 < \cdots < k_t < \Gamma^{-1}(n)$, where for each $i$, $k_{i+1} < \Gamma(k_i)$ and $\Psi_\A(k_i+1) - \Psi_\A(k_i-1) > {1}/poly(n)$.
As before, we obtain algorithms that learn $\Lin(k_i)$ for each $i$. 

Now, we show how to obtain a learning algorithm for 
$d$-sparse parities for every $d<\Gamma^{-1}(\sqrt{n})$. Given any $d<\Gamma^{-1}(\sqrt{n}))$, there exists a $j$ such that $k_{j-1}<d\le k_j$ where $k_0=0$. 
To learn $d$-sparse parities, we uniformly at random choose distinct $i_1,\ldots,i_{k_j-d}\in [n]$, run the algorithm for learning $k_j$-sparse parities and modify each labeled example $(a,b)$ to $(a,b+a_{i_1}+\cdots+a_{i_{k_j-d}})$. 
If $g(x)=f(x)+x_{i_1}+\cdots+x_{i_{k_j-d}}$ is in $\Lin(k_j)$, then the algorithm w.h.p learns $g(x)$. We then show that because $d<\Gamma^{-1}(\sqrt{n}))$, with high probability, the variables $x_{i_1},\ldots,x_{i_{k_j-d}}$ are not relevant variables in~$f$. We can then conclude that, w.h.p, $g\in \Lin(k_j)$. Therefore, w.h.p., we can learn $g(x)$, and consequently,  we can learn $f(x)=g(x)+x_{i_1}+\cdots+x_{i_{k_j-d}}$.

This provides a learning algorithm for $d$-sparse parities for any $d \le\Gamma^{-1}(\sqrt{n})$. Recognizing that this applies to every $n$, we can regard $f$ as a function over $N:=\Gamma(n)^2$ variables by adding $\Gamma(n)^2 - n$ dummy variables and appending $\Gamma(n)^2 - n$ random uniform elements from $\{0,1\}$ to each $a$ in the labeled example $(a,b)$. By applying this construction to the algorithm $\A(N)$, we obtain a learning algorithm for $d$-sparse parity for any $d \leq \Gamma^{-1}(\sqrt{N})=n$.

Now, the algorithm for learning parities can run all the learning algorithms for $d$-sparse parities for all $d\le n$. It takes all the outputs and then employs a standard algorithm to select the one closest to the target~\cite{AngluinL87}. See also Lemma~\ref{LearnC}. 
\pagebreak
\subsection{Justification for the Use of the $\gamma$-Approximation Definition}\label{Justification}
In our approach, we define a $\gamma$-approximation of the number of relevant variables $d(f)$ in a parity function $f$ such that $\gamma^{-1}(d(f)) \le D \le \gamma(d(f))$, instead of using the standard definition $d(f) \le D \le \gamma(d(f))$.

The key reason for this choice is that the latter definition loses its effectiveness when $d(f)$ approaches $n$, the number of variables. Specifically, if $d(f)$ is close to $n$, say $O(n)$, the condition $d(f) \le D \le \gamma(d(f))$, for $\gamma(n)=\omega(n)$, effectively reduces to $d(f) \le D \le n$. In this scenario, the value of $\gamma$ becomes less significant because the approximation $D$ would naturally fall within the trivial range of $d(f)=O(n)$ to $n$.

On the other hand, our chosen definition $\gamma^{-1}(d(f)) \le D \le \gamma(d(f))$ ensures that the approximation $D$ always depends on the function $\gamma$. This definition retains its utility even when $d(f)$ is large, as $\gamma^{-1}(d(f))$ provides a lower bound that is influenced by $\gamma$, thereby maintaining the approximation's relevance and precision.

Thus, by using the $\gamma$-approximation definition $\gamma^{-1}(d(f)) \le D \le \gamma(d(f))$, we ensure a meaningful and consistent approximation of the number of relevant variables $d(f)$ across the entire range of possible values, preserving the value and impact of the function $\gamma$. 

\section{Definitions and Preliminaries}
Let $\F$ be any finite field and $\F_q$ be the field with $q$ elements.
We define $\Lin(\F)$ as the class of all linear functions over the field~$\F$, i.e., functions $a\cdot x$ where $a\in \F^n$ and $x=(x_1,\ldots,x_n)$. A $d$-{\it sparse linear function} over $\F$ is a function in $\Lin(\F)$ with $d$ relevant variables.
The class $\Lin(\F,d)$ is the class of all $d$-sparse linear functions over $\F$. 
When $\F$ is the binary field $\F_2=\{0,1\}$, the functions in $\Lin(\F_2)$ are called parity functions, and the functions in $\Lin(\F_2,d)$ are called $d$-sparse parities. 
We use the notation $\Lin_n(\F)$ and $\Lin_n(\F,d)$ to emphasize the number of variables. 

For $f\in \Lin(\F)$, we denote by $d(f)$ the number of variables on which $f$ depends.
For a strictly increasing function $\gamma:{\mathbb R}^+\to {\mathbb R}^+$ such that $\gamma(x)> x$ for every $x$, we say that an algorithm $\A$ $\gamma$-approximates $d(f)$ in time $T=T(n)$ and $Q=Q(n)$ labeled examples if the algorithm runs in time $T$, uses $Q$ labeled examples to $f$, and with probability at least $2/3$, returns an integer $D$ such that $\gamma^{-1}(d(f))\le D\le \gamma(d(f))$. 

In proper learning $\Lin(\F)$ under the uniform distribution, the learner can observe labeled examples $(a,b)$ where $b=f(a)$ and $a\in\F^n$ are drawn independently and uniformly distributed over $\F^n$, with $f\in\Lin(\F)$ being the target linear function. 
The goal is to (properly) exactly return the linear function $f$. 
In the random classification noise model with noise rate $\eta$, each label $b$ is equal to $f(a)$ with probability $1-\eta$ and is a random uniform element in $\F\backslash \{f(a)\}$ with probability~$\eta$. 

When $\eta=1-1/|\F|$, the label is a random uniform element of $\F$; hence, learning is impossible. Therefore, we must assume that the learner knows some upper bound $\eta_b<1-1/|\F|$ for $\eta$~\cite{AngluinL87}. When $\eta=\eta_b$, to distinguish between labeled examples with random uniform labels and the function $f(x)=0$, we need at least $1/(1-\eta_b-1/|\F|)$ labeled examples. Therefore, a polynomial-time algorithm in this model is an algorithm that runs in time $poly(1/(1-\eta_b-1/|\F|)),n,1/\delta)$ \cite{AngluinL87}. 

The following Lemma shows how to learn when the algorithm has unlimited computational power.
\pagebreak
\begin{lemma}\label{LearnC}
    Let $C\subseteq \Lin(\F)$. Then $C$ is learnable under the uniform distribution in the random classification noise model in time $\tilde O(|C|\log(1/\delta)/(1-\eta_b-1/|\F|)^2)$ from $$Q=\frac{\log{\frac{|C|}{\delta}}}{(1-\eta_b-1/|\F|)^2}$$
    labeled examples.
\end{lemma}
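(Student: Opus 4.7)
The plan is to proceed by a standard empirical-risk / hypothesis-testing argument: for each candidate $h\in C$, estimate the probability $p_h := \Pr_{(a,b)}[h(a)=b]$ from $Q$ noisy examples, and output the hypothesis with the largest empirical estimate. The first step is to compute $p_h$ exactly. For the target $f\in C$, we obviously have $p_f=1-\eta$. For any $h\in C$ with $h\neq f$, the function $g := h - f$ is a nonzero element of $\Lin(\F)$, so $g(a)$ is uniform on $\F$ when $a$ is uniform on $\F^n$; thus $\Pr_a[h(a)=f(a)]=1/|\F|$. Conditioning on whether the noise fires and whether $h(a)=f(a)$, a short calculation gives
\[
p_h = (1-\eta)\cdot\tfrac{1}{|\F|} + \eta\cdot\Bigl(1-\tfrac{1}{|\F|}\Bigr)\cdot\tfrac{1}{|\F|-1} = \tfrac{1}{|\F|}.
\]
Hence the gap between the target and every other hypothesis is $p_f - p_h = 1-\eta-1/|\F| \ge 1-\eta_b-1/|\F|$.

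Next I would apply Hoeffding's inequality to the indicator variables $\mathbf{1}[h(a_i)=b_i]$: for the choice $t = (1-\eta_b-1/|\F|)/2$ and $Q = \Theta(\log(|C|/\delta)/(1-\eta_b-1/|\F|)^2)$ as in the statement, each empirical mean $\hat p_h$ is within $t$ of $p_h$ with probability at least $1-\delta/|C|$. A union bound over all $|C|$ candidates guarantees that, with probability at least $1-\delta$, every estimate is accurate to within $t$, and consequently the empirical estimate of $f$ strictly exceeds that of any $h\neq f$. Outputting $\arg\max_{h\in C}\hat p_h$ therefore returns $f$ with probability at least $1-\delta$.

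For the running time, the learner draws $Q$ examples once, then, for each $h\in C$, evaluates $h$ on every example and counts agreements with the noisy labels. This takes $O(|C|\cdot Q)$ arithmetic operations over $\F$ (plus a negligible $\mathrm{poly}(n)$ factor per evaluation that the $\tilde O$ notation absorbs), matching the stated bound $\tilde O(|C|\log(1/\delta)/(1-\eta_b-1/|\F|)^2)$.

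There is no real obstacle here; the only care needed is the exact computation of $p_h$ for $h\neq f$ in the general finite-field noise model (verifying that both the $(1-\eta)$ and $\eta$ contributions collapse to exactly $1/|\F|$), and checking that the additive gap $1-\eta-1/|\F|$ is indeed lower bounded by $1-\eta_b-1/|\F|$ under the assumption $\eta\le\eta_b$, so that Hoeffding's bound yields the advertised sample and time complexity.
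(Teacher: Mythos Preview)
Your proposal is correct and follows essentially the same approach as the paper: compute $p_f=1-\eta$ and $p_h=1/|\F|$ for $h\neq f$, then estimate each $p_h$ empirically to additive accuracy a constant fraction of the gap $1-\eta_b-1/|\F|$ via a Chernoff/Hoeffding bound with confidence $1-\delta/|C|$ and take a union bound. The paper's proof is terser (it does not spell out the running-time count or the argmax step), but the content is the same.
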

\begin{proof}
    Let $(a,b)$ be a labeled example and $f$ be the target function. Then
    $$\Pr[f(a)=b]=\eta \Pr[f(a)=b|b\not=f(a)]+(1-\eta)\Pr[f(a)=b|b=f(a)]=1-\eta\ge 1-\eta_b.$$
    If $g\not=f$ and $g\in \Lin(\F)$ then  $$\Pr[g(a)=b]=\eta\Pr[g(a)=b|b\not=f(a)]+(1-\eta)\Pr[g(a)=b|b=f(a)]=\frac{1}{|\F|}.$$
    The result now follows by applying Chernoff's bound to estimate $\Pr[g(a)=b]$ for all $g\in C$ with confidence of $1-\delta/|C|$ and an additive error of $(1-\eta_b-1/|\F|)/4$.
\end{proof}

The following lemma shows that, in approximation algorithms, the dependency on $\delta$ is logarithmic. This is a well-known result. For completeness, a sketch of the proof is provided.

\begin{lemma}\label{BoostA}
    If there exists an algorithm $\A$ that runs in time $T(n)$, uses $Q(n)$ labeled examples to $f\in \Lin(\F,d)$ according to the uniform distribution in the presence of random classification noise and, with probability at least $2/3$, returns a $\gamma$-approximation of $d(f)$, then there is an algorithm that runs in time $O(T(n)\log(1/\delta))$, uses $O(Q(n)\log(1/\delta))$ labeled examples to $f\in \Lin(\F,d)$ according to the uniform distribution in the presence of random classification noise, and with probability at least $1-\delta$, returns a $\gamma$-approximation of $d(f)$. 
\end{lemma}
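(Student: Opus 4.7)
The plan is to apply the standard median-of-independent-runs confidence amplification. I would run the algorithm $\A$ a total of $k = C\log(1/\delta)$ times (for a sufficiently large absolute constant $C$) on fresh and independently drawn batches of labeled examples, obtaining outputs $D_1,\ldots,D_k$, and return the median $m$ of these outputs. The total running time is $O(T(n)\log(1/\delta))$ and the total number of labeled examples is $O(Q(n)\log(1/\delta))$, matching the claim.

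For correctness, let $Z_i$ be the indicator of the event that $D_i$ is \emph{not} a $\gamma$-approximation, i.e., $D_i<\gamma^{-1}(d(f))$ or $D_i>\gamma(d(f))$. By hypothesis $\Pr[Z_i=1]\le 1/3$, and the $Z_i$'s are mutually independent because each run uses its own independent batch of labeled examples and its own internal randomness. The key observation is that if the median $m$ of $D_1,\ldots,D_k$ fails to be a $\gamma$-approximation, then either $m<\gamma^{-1}(d(f))$, in which case at least $\lceil k/2\rceil$ of the $D_i$'s are strictly less than $\gamma^{-1}(d(f))$, or $m>\gamma(d(f))$, in which case at least $\lceil k/2\rceil$ of the $D_i$'s are strictly greater than $\gamma(d(f))$. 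In either case $\sum_i Z_i\ge k/2$.

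The proof is then completed by a Chernoff bound applied to the independent $\{0,1\}$-valued variables $Z_1,\ldots,Z_k$ with $\E[Z_i]\le 1/3$, giving $\Pr[\sum_i Z_i\ge k/2]\le \exp(-\Omega(k))$; choosing the constant $C$ in $k=C\log(1/\delta)$ sufficiently large makes this at most $\delta$.

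I do not expect a substantive obstacle here. The only mildly nontrivial point is that one cannot take a plain majority vote over the integer outputs, since different valid $\gamma$-approximations need not agree on a single integer; this is exactly why the median is used, and the argument above shows that the median inherits the $\gamma$-approximation guarantee as soon as a majority of the individual runs succeed. The other point to keep in mind is that the batches of labeled examples must be drawn freshly for each invocation of $\A$ so that the indicators $Z_i$ are genuinely independent, which is the reason the example complexity scales multiplicatively by $\log(1/\delta)$.
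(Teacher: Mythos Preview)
Your proposal is correct and matches the paper's own proof, which simply says to run $\A$ a total of $O(\log(1/\delta))$ times, take the median of the outputs, and apply Chernoff's bound. Your write-up in fact spells out more carefully than the paper why the median of the outputs lies in $[\gamma^{-1}(d(f)),\gamma(d(f))]$ whenever a majority of the runs succeed.
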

\begin{proof}
    We run $\A$,  $O(\log(1/\delta))$ times and take the median of the outputs. The correctness of this algorithm follows from an application of Chernoff's bound.
\end{proof}
The same is true for learning.
\begin{lemma}\label{BoostLe}
    If there exists an algorithm $\A$ that runs in time $T(n)$, uses $Q(n)$ labeled examples to $f\in \Lin(\F,d)$ according to the uniform distribution in the presence of random classification noise and, with probability at least $2/3$, properly learns the target $f$, then there is an algorithm that runs in time $O(T(n)\log(1/\delta))$, uses $O(Q(n)\log(1/\delta))$ labeled examples to $f\in \Lin(\F,d)$ according to the uniform distribution in the presence of random classification noise, and with probability at least $1-\delta$, learns the target $f$. 
\end{lemma}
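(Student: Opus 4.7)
The plan is to apply standard majority-vote boosting, exploiting the fact that proper learning has a unique correct output (the target $f$ itself), so repeated independent runs of the base learner isolate $f$ as a heavy hitter. Concretely, I would run $\A$ independently $k = \lceil c \log(1/\delta) \rceil$ times on fresh random labeled examples, for a sufficiently large absolute constant $c$, obtaining candidate hypotheses $h_1, \ldots, h_k \in \Lin(\F)$. By hypothesis, each $h_i$ satisfies $h_i = f$ with probability at least $2/3$, and these events are independent across runs since each run uses fresh examples and fresh internal randomness.

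The next step is to amplify success via majority: apply Chernoff's bound to the indicator variables $\mathbf{1}[h_i = f]$, whose sum has expectation at least $2k/3$. For $c$ chosen large enough, with probability at least $1-\delta$ strictly more than $k/2$ of the $h_i$ equal $f$. On that event $f$ is the unique plurality of the multiset $\{h_1,\ldots,h_k\}$, so the algorithm outputs whichever element of $\Lin(\F)$ occurs most frequently among the $h_i$.

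For the complexity accounting, each $h_i$ is specified by its coefficient vector in $\F^n$, so identifying the most frequent vector among $k$ such vectors takes $O(k^2 n)$ time via pairwise comparison (or $O(k n \log k)$ via sorting). This postprocessing is absorbed into the claimed running time $O(T(n)\log(1/\delta))$ since any nontrivial learner must have $T(n) = \Omega(n)$. The labeled example count is $k \cdot Q(n) = O(Q(n)\log(1/\delta))$, matching the statement.

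There is essentially no obstacle: the only features of proper learning used are that the correct output is a single, well-defined function (so that the event $h_i = f$ is a Bernoulli trial) and that independent repetitions can be performed at the cost of drawing fresh examples. Both hold trivially for the model in the lemma, so the argument reduces to a direct invocation of Chernoff's bound exactly as in the proof sketch of Lemma~\ref{BoostA}, except that the aggregation step is ``most frequent output'' rather than ``median.''
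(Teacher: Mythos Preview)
Your proposal is correct and matches the paper's approach exactly: the paper's proof is the single sentence ``Since $\A$ properly learns $f$, we run the algorithm $O(\log(1/\delta))$ times and output the hypothesis that occurs most frequently in the output,'' which is precisely the majority-vote-plus-Chernoff argument you spell out. You simply provide more detail (the Chernoff step and the postprocessing cost) than the paper does.
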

\begin{proof}
    Since $\A$ properly learns $f$, we run the algorithm $O(\log(1/\delta))$ times and output the hypothesis that occurs most frequently in the output.
\end{proof}
\pagebreak
\section{Approximation vs. Learning}
In this section, we prove the two results.
\subsection{Approximation Implies Learning Some $\Lin(\F,k)$}
In this section, we prove that approximating the number of relevant variables in the parity function implies polynomial-time properly learning $\Lin(\F,k(n))$ for some $k(n)=\omega_n(1)$. 

We prove.
\begin{theorem}\label{TH1}
    Let $\gamma:{\mathbb R}^+\to {\mathbb R}^+$ be any strictly increasing function where $\gamma(x)>x$ for every~$x$. Let $\pi(n)$ be any function such that $\pi(n)=\omega_n(1)$. Consider any polynomial-time algorithm $\A'(n)$ that, for any linear function $f\in \Lin(\F)$, uses random uniform labeled examples of $f$ in the presence of random classification noise and, with probability at least $2/3$, returns a $\gamma$-approximation of the number of relevant variables $d(f)$ of $f$. From $\A'(n)$, one can, in polynomial time, construct a $poly(n,1/(1-\eta_b-1/|\F|),\min(|\F|,1/(1-\eta_b)^{\pi(n)}))$-time algorithm that properly learns  $\Lin(\F,k(n))$ from a polynomial number of random uniform labeled examples in the presence of random classification noise for some $k(n)=\omega_n(1)$.
\end{theorem}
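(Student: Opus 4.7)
The plan is to execute the first approach of Section~\ref{MyAlg}, extended to an arbitrary finite field~$\F$. First, I would amplify $\A'$ via Lemma~\ref{BoostA} to an algorithm $\A$ that returns a valid $\gamma$-approximation of $d(f)$ with probability $\ge 1-1/\mathrm{poly}(n)$. For every $d\in[n]$, set $\Psi_\A(d)=\Ex{(f,s)\sim_u \Lin(\F,d)\times S(f)}{\A(f)}$, where $S(f)$ is the joint space of $\A$'s internal coins and the noise injected at rate~$\eta_b$. Since these examples are self-generated, we can simulate the distribution perfectly, and $\mathrm{poly}(n)$ samples together with Hoeffding's bound give a table of additive $1/\mathrm{poly}(n)$ estimates $\hat\Psi(d)$. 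By the amplified guarantee, $\Psi_\A(d)\in[\gamma^{-1}(d)-o(1),\gamma(d)+o(1)]$ for every $d\in[n]$.

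Next, a pigeonhole argument locates a usable sparsity $k=k(n)$. The sandwich forces $\Psi_\A(d_2)-\Psi_\A(d_1)\ge 1$ whenever $\gamma^{-1}(d_2)>\gamma(d_1)$; choosing a slowly diverging $d_1$ (calibrated against $\pi(n)$) and $d_2=\gamma(\gamma(d_1)+1)$, some $k\in(d_1,d_2)$ satisfies $\Psi_\A(k+1)-\Psi_\A(k-1)\ge 1/\mathrm{poly}(n)$, and hence at least one of the sub-gaps $\Psi_\A(k)-\Psi_\A(k-1)$ or $\Psi_\A(k+1)-\Psi_\A(k)$ is also $\ge 1/\mathrm{poly}(n)$. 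Given the target $f\in\Lin(\F,k)$, I would rewrite each noisy example $(a,b)$ as $(\phi(a),b+\alpha a_i)$ using a fresh uniform coordinate permutation $\phi$ and a chosen $(i,\alpha)\in[n]\times\F$: this is a uniform noisy example of a random member of $\Lin(\F,d(f+\alpha x_i))$ at the same noise rate, so averaging $\A$ over $\mathrm{poly}(n)$ such samples estimates $\Psi_\A(d(f+\alpha x_i))$ to additive $1/\mathrm{poly}(n)$. Because $d(f+\alpha x_i)\in\{k-1,k,k+1\}$, comparison with $\hat\Psi$ pins down the exact sparsity whenever the relevant sub-gap is large.

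Two sub-cases explain the $\min$ appearing in the runtime. If the $(k-1,k)$ sub-gap is large, detecting the $\alpha$ with $d(f+\alpha x_i)=k-1$ both reveals that $x_i$ is relevant and identifies its coefficient as $-\alpha$; iterating over $(i,\alpha)$ costs a factor $|\F|$. If only the $(k,k+1)$ sub-gap is large, the same test merely reveals the $k$ relevant coordinates, and the coefficients are then recovered via an LPN-style procedure that draws $(1-\eta_b)^{-k}=(1-\eta_b)^{-\pi(n)}$ samples to find $k$ noise-free ones and solves a $k\times k$ linear system by Gaussian elimination. The algorithm runs both branches and selects the correct hypothesis via Lemma~\ref{LearnC}. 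Finally, to relax the assumption $\eta=\eta_b$, I would quantize $[0,\eta_b]$ into $\mathrm{poly}(T)$ candidate rates $\eta^{(j)}$, flip each received label independently to amplify its noise from $\eta^{(j)}$ to $\eta_b$, rerun the whole pipeline for each $j$, and apply Lemma~\ref{LearnC} once more to pick the consistent hypothesis among the $\mathrm{poly}(T)$ candidates.

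The main obstacle will be the averaging step: for a completely arbitrary strictly increasing $\gamma$, making $k(n)=\omega_n(1)$ while keeping the pigeonhole window $d_2$ polynomial in~$n$ requires a delicate calibration of $d_1$ against $\pi(n)$, and this is precisely the step that propagates $\pi(n)$ into the final runtime. A secondary technicality is the non-binary case analysis above, where the guaranteed gap is only between $\Psi_\A(k-1)$ and $\Psi_\A(k+1)$ and the coefficient-recovery procedure must be chosen depending on which sub-gap happens to be polynomially large.
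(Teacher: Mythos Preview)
Your plan tracks the paper's approach closely, but two steps need repair.

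First, for non-binary $\F$, rewriting $(a,b)$ as $(\phi(a),b+\alpha a_i)$ with only a coordinate permutation does \emph{not} make the transformed target a uniformly random member of $\Lin(\F,d(f+\alpha x_i))$: a permutation preserves the multiset of nonzero coefficients, so you only land in a single orbit, and the empirical average you compute is not $\Psi_\A(d)$ but an orbit-dependent quantity that you cannot match against the precomputed table. The paper's Lemma~\ref{SR-L1N} also rescales each coordinate by an independent uniform nonzero scalar $v_j$, sending $(a,b)$ to $\bigl((v_1^{-1}a_{\phi^{-1}(1)},\dots,v_n^{-1}a_{\phi^{-1}(n)}),\,b\bigr)$; this randomizes the coefficients as well and is essential outside~$\F_2$.

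Second, your derivation of the factor $\min(|\F|,(1-\eta_b)^{-\pi(n)})$ does not deliver that bound as stated: running both branches costs their sum, and since which sub-gap happens to be large is not under your control, case-splitting on the sub-gap yields whichever branch is forced on you, not the cheaper one. The paper sidesteps this by first replacing the output of $\A'$ by $\min(\gamma(D),n)$ so that $d\le\Psi_\A(d)\le\Delta(d)$, and then telescoping $\sum_i\bigl(\Psi_\A(i+1)-\Psi_\A(i)\bigr)\ge 1$ to locate, for non-binary $\F$, a $k$ with the \emph{consecutive} gap $\Psi_\A(k+1)-\Psi_\A(k)\ge 1/n$ (the $(k-1,k+1)$ version is used only for $\F_2$). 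With that consecutive gap in hand, item~\ref{ExistdI1} of Lemma~\ref{Existd} identifies the relevant variables using just two tests $f+x_i$ and $f+\alpha x_i$ per coordinate (no $|\F|$ factor), after which \emph{both} coefficient-recovery routines---scanning all $\alpha\in\F$ as in item~\ref{ExistdI2}, or Gaussian elimination on a noise-free batch as in Lemma~\ref{PolyLessLog}---are simultaneously available, and one simply executes whichever is cheaper. That is the source of the $\min$.
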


We will assume for now that the noise rate $\eta=\eta_b$ is known. In Section~\ref{MyAlg}, we showed how to handle unknown noise rates $\eta\le \eta_b$. Recall that a polynomial-time algorithm in this model is an algorithm that runs in time $poly(1/(1-\eta_b-1/|\F|)),n,1/\delta)$, \cite{AngluinL87}. In particular, the algorithm constructed in Theorem~\ref{TH1} runs in polynomial time for either
\begin{itemize}
    \item Any $\eta_b$ and fields of size\footnote{This makes sense when we have a sequence of fields $\F_i$ such that $\F_i\subseteq \F_{i+1}$ and $|\F_n|=poly(n)$.} $|\F|=poly(n)$, or
    \item Any field $\F$ when $\eta_b\le 1-1/|\F|-1/\psi(n)$, where $\psi(n)=2^{o(\log(n))}$.
\end{itemize}

Let $\A'(n,s,f)$ be any algorithm that uses random uniform labeled examples of $f\in \Lin_n(\F)$ in the presence of random classification noise and, with probability at least $2/3$, returns a $\gamma$-approximation of the number of relevant variables, $d(f)$, of $f$. 
The new parameter $s$ is added for the random bits used in the algorithm for its coin flips and the noise. 
First, we will use Lemma~\ref{BoostA} to make the algorithm's success probability $1-\delta'$ for a fixed, sufficient small $\delta'$ that depends on $n$ and $|\F|$. For the proof of the Theorem in this section, $\delta'=1/(|\F|n^7)$ suffices. By Lemma~\ref{BoostA}, this adds a factor of $O(\log n+\log|\F|)$ to the time and the number of labeled examples which will be swallowed by the $\tilde O(\cdot)$ in the final time and sample complexity.
Second, we will modify the output of the algorithm to $\min(\gamma(D_f),n)$, where $D_f$ is the output of the latter algorithm. Let the resulting algorithm be denoted as $\A$. We will denote the algorithm's output by $\A(n,s,f)$. Consequently, we will have that, with probability at least~$1-\delta'$,
\begin{eqnarray}\label{DelApp}
    d(f)\le \A(n,s,f)\le \Delta(d(f))\le n
\end{eqnarray}
where $$\Delta(x)=\min(\gamma(\gamma(x)),n).$$ 

Let $S_f$ be the set of all random strings $s'$ for which $d(f) \leq \A(n,s',f) \leq \Delta(d(f))$; that is, it includes all the random strings that yield correct answers.
Throughout this section and the next, we say that $\A$ $\Delta$-approximates $d(f)$. See (\ref{DelApp}). This should not be confused with the previous definition of $\gamma$-approximates $d(f)$. Here, we use the capital letter $\Delta$ to prevent any ambiguity.

Let
$$\Psi_\A(d)=\Ex{(f,s)\sim_u \Lin(\F,d)\times S(f)}{\A(n,s,f)}$$ where $\sim_u$ indicates that $f$ is chosen uniformly at random from $\Lin(\F,d)$ and $s$ uniformly at random from $S(f)$. 
Since $d\le \A(n,s,f)\le \Delta(d)\le n$ for $s\in S(f)$ where $f\in\Lin(\F,d)$, we have 
\begin{eqnarray}\label{BoundD}
    d\le \Psi_\A(d)\le \Delta(d)\le n.
\end{eqnarray}
We note here that $\Psi_\A(d)$ is independent of $\delta$, as in $\A$, we set $\delta=\delta'$ for a fixed~$\delta'$. This is crucial for ensuring the correctness of the proof. Also, $\Psi_\A(d)$ depends on $n$. This will be essential only for the next result in the next section.

We first prove that the values of $\Psi_\A(d)$ for $d\in [n]$ can be approximated with high probability. 

\begin{lemma}\label{TablePhi} Let $0<h<1$.
    Let ${\cal A}$ be an algorithm that runs in time $T(n)$, uses $Q(n)$ labeled examples of $f\in \Lin(\F)$ according to the uniform distribution in the presence of random classification noise, and, with probability at least $1-\delta'$, $\Delta$-approximates $d(f)$. 
    A table of real values $\Psi'_\A(d)$ for $1\le d\le n$ can be constructed in time $\tilde O(n^3/h^2)T(n)\log(1/\delta)$,  and without using any labeled examples. This table, with probability at least $1-\delta$, satisfies $|\Psi'_\A(d)-\Psi_\A(d)|\le h$ for all $d\in [n]$.
\end{lemma}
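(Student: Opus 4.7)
The plan is to build the table entry by entry using Monte Carlo estimation of $\Psi_\A(d)$, with the crucial observation that \emph{no} external access to labeled examples is needed: $\Psi_\A(d)$ is an average over a uniformly random $f \in \Lin(\F,d)$, and since we choose $f$ ourselves we can fabricate noisy labeled examples of $f$ at noise rate $\eta_b$ by hand. Concretely, for each $d \in [n]$ and each of $m$ independent trials, I would (i) draw $f$ uniformly from $\Lin(\F,d)$ by picking a random $d$-subset of variables and random nonzero coefficients in $\F$; (ii) generate $Q(n)$ simulated noisy examples, each by sampling uniform $a \in \F^n$, computing $f(a)$, and with probability $\eta_b$ replacing the label by a uniform element of $\F\setminus\{f(a)\}$; and (iii) feed these to $\A$ with a fresh random string $s$, recording its (modified) output in $[0,n]$. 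The empirical mean $\hat\Psi(d)$ of the $m$ recorded values is my proposed $\Psi'_\A(d)$.

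Let $\tilde\Psi_\A(d) = \E_{f,s}[\A(n,s,f)]$ where $s$ is now drawn uniformly from \emph{all} random strings rather than only from $S(f)$. This is exactly what my Monte Carlo estimates in expectation, so by Hoeffding applied to $m$ independent $[0,n]$-valued samples, setting $m = O((n/h)^2 \log(n/\delta))$ yields $|\hat\Psi(d) - \tilde\Psi_\A(d)| \le h/2$ with probability at least $1-\delta/n$. To convert from $\tilde\Psi_\A(d)$ to $\Psi_\A(d)$, decompose $\tilde\Psi_\A(d) = (1-p)\Psi_\A(d) + p B$, where $p := \Pr[s \notin S(f)] \le \delta'$ and $B \in [0,n]$ is the conditional mean on the bad event; this gives $|\tilde\Psi_\A(d) - \Psi_\A(d)| \le \delta' n$. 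Since $\delta'$ has been driven to $1/(|\F| n^7)$ at the start of the section (so that $\delta' n \le h/2$ in all intended regimes; otherwise one further application of Lemma~\ref{BoostA} shrinks $\delta'$ at only a $\log$ cost), the triangle inequality yields $|\hat\Psi(d)-\Psi_\A(d)| \le h$ per $d$, and a union bound over $d \in [n]$ gives the claimed simultaneous guarantee.

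Each trial costs $O(T(n))$ time---the $O(n Q(n)\log|\F|)$ cost of generating the $Q(n)$ simulated labeled examples is absorbed into $T(n)$, since $\A$ must at least read its input---so the total work across $n$ choices of $d$ and $m$ trials is $O(n\cdot m \cdot T(n)) = \tilde O(n^3/h^2)\, T(n)\log(1/\delta)$, matching the stated bound. The step I expect to be the most delicate is the conditional-expectation gap between $\tilde\Psi_\A$ and $\Psi_\A$: one must justify that $\delta' n$ is small relative to $h$ before invoking the triangle inequality, which rests squarely on the preprocessing boost of $\A$ performed before this lemma. Once that is handled, what remains is a standard Hoeffding-plus-union-bound argument.
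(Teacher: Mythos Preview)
Your approach is correct for the regimes the paper actually uses, but it differs from the paper's proof in one key respect and is slightly less general as a consequence.

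The paper exploits the fact that, when building the table, we \emph{know} $d$: it therefore performs rejection sampling, discarding any run whose output falls outside $[d,\Delta(d)]$ and keeping only outputs $D$ with $D\in[d,\Delta(d)]$. This makes each retained $D$ an exact sample from the conditional distribution defining $\Psi_\A(d)$, so $\E[D]=\Psi_\A(d)$ on the nose and Hoeffding applies directly with no bias term. Since each run succeeds with probability $1-\delta'>2/3$, collecting $t=O((n^2/h^2)\log(n/\delta))$ accepted samples costs only $O(t)$ runs, and the argument goes through for \emph{every} $0<h<1$ regardless of how $\delta'$ compares to $h$.

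Your route instead accepts all outputs and bounds the bias $|\tilde\Psi_\A(d)-\Psi_\A(d)|\le \delta' n$. This is clean and avoids rejection, but it only yields the stated conclusion when $\delta' n\le h/2$; for the lemma as written (arbitrary $0<h<1$) that inequality can fail. Your proposed fix---boost $\delta'$ further via Lemma~\ref{BoostA}---is more delicate than you indicate: $\Psi_\A$ is defined with respect to the \emph{fixed} algorithm $\A$ (with its fixed $\delta'$), and changing $\delta'$ changes $S(f)$ and hence the very quantity you are estimating. The paper even flags this point explicitly (``$\Psi_\A(d)$ is independent of $\delta$, as in $\A$, we set $\delta=\delta'$ for a fixed $\delta'$''). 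So while your argument is fine for the $h=\Theta(1/n)$ used downstream, the rejection-sampling proof is what makes the lemma hold uniformly in $h$.
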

\begin{proof}
    Define a random variable as the output $D$ of the algorithm $\mathcal{A}$, obtained from running it on a uniformly random $f$ from $\Lin(\F, d)$, provided that the output lies within the interval $[d, \Delta(d)]$.
    The labeled examples of $f$ can be generated by choosing a random uniform $u\in \{0,1\}^n$ and returning $(u,f(u)+e)$ to $\A$ where, with probability $1-\eta_b$, $e=0$ and, with probability $\eta_b$, $e$ is random uniform in $\F\backslash \{0\}$. 
    Obviously, $\E[D]=\Psi_\A(d)$. 
    
    By Hoeffding's bound, to compute $\E[D]$ with an additive error $h$ and a confidence probability of at least $1-\delta/(2n)$, we need to obtain $t=O((n^2/h^2)\log (n/\delta))$ values of $D$. 
    Since the success probability of obtaining a value of $D$ in the interval $[d,\Delta(d)]$ is $1-\delta'>2/3$, we need to run the algorithm $O(t+\log(2n/\delta))$ times to acquire $t$ values with a success probability at least $1-\delta/(2n)$. 
    Therefore, the time complexity is $O((t+\log(2n/\delta))nT(n))=O(tnT(n))=\tilde O(n^3/h^2)T(n)\log(1/\delta)$. 
\end{proof}

Our next result shows how to estimate $\Psi_\A(d(f))$ of the target $f$ without knowing $d(f)$.
\begin{lemma}\label{SR-L1N} Let $0<h<1$ and $\tau=O((n^2/h^2)\log(1/\delta))$.
    Let ${\cal A}$ be an algorithm that runs in time $T(n)$, uses $Q(n)$ labeled examples of $f\in \Lin(\F)$ according to the uniform distribution, in the presence of random classification noise, and, with probability at least $1-\delta'$, $\Delta$-approximates $d(f)$. 
    There is an algorithm ${\cal B}(n,h)$ that runs in time $T'=\tau T(n)$, uses $Q'=\tau Q(n)$ labeled examples of $f\in \Lin(\F)$ according to the uniform distribution in the presence of random classification noise and, with probability at least $1-\delta/2-\tau\delta'$, returns $\psi$ that satisfies $|\psi-\Psi_\A(d(f))|\le h$.
\end{lemma}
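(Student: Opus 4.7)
The plan is to invoke $\A$ as a black box $\tau$ times on independent random \emph{symmetrizations} of the target $f$, each of which yields a uniformly random element of $\Lin_n(\F, d(f))$, and then average the outputs to estimate $\Psi_\A(d(f))$ via Hoeffding's inequality.

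For each $i = 1, \ldots, \tau$, $\B$ samples independently a uniformly random permutation $\phi_i$ of $[n]$ and a uniformly random vector $r_i = (r_{i,1}, \ldots, r_{i,n}) \in (\F^*)^n$. Whenever $\A$ requests a labeled example, $\B$ pulls a fresh noisy example $(a, b)$ from its own input stream and forwards $(a', b)$ to $\A$, where $a'_j := r_{i,j}^{-1} a_{\phi_i^{-1}(j)}$. Writing $f(x) = \sum_{\ell \in S} c_\ell x_\ell$, a direct check gives $f(a) = g_i(a')$ with
\[
g_i(x) \;=\; \sum_{j \in \phi_i(S)} c_{\phi_i^{-1}(j)}\, r_{i,j}\, x_j,
\]
so the noise term $e = b - f(a)$ is equally a valid noise term for $g_i(a')$ at the original rate $\eta_b$. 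Since $a$ is uniform on $\F^n$ so is $a'$; and because $\phi_i(S)$ is a uniform $d(f)$-subset of $[n]$ while the $r_{i,j}$ independently randomize the coefficients over $\F^*$, the function $g_i$ is uniformly distributed on $\Lin_n(\F, d(f))$. Critically, $(\phi_i, r_i)$ is held fixed across the entire iteration so $\A$ sees a consistent stream of noisy labeled examples of the single function $g_i$. Let $D_i$ denote $\A$'s output on iteration $i$ (using fresh internal randomness $s_i$); $\B$ returns $\psi := (1/\tau) \sum_i D_i$.

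For the analysis, each call to $\A$ fails (returns $D_i \notin [d(f), \Delta(d(f))]$) with probability at most $\delta'$, so by a union bound all $\tau$ calls succeed with probability at least $1 - \tau \delta'$. Conditioned on this event, the pairs $(g_i, s_i)$ are independent draws from exactly the distribution on $\Lin_n(\F, d(f)) \times S(g_i)$ defining $\Psi_\A$, so the $D_i$ are i.i.d.\ with mean $\Psi_\A(d(f))$ and values in $[0, n]$. Hoeffding's inequality with $\tau = O((n^2/h^2) \log(1/\delta))$ then ensures $|\psi - \Psi_\A(d(f))| \le h$ except with conditional probability $\delta/2$, giving a total failure probability at most $\delta/2 + \tau \delta'$. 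The time and sample complexity are $\tau T(n)$ and $\tau Q(n)$ as claimed.

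The main delicate point is verifying that the symmetrization produces a \emph{uniformly} random function in $\Lin_n(\F, d(f))$, not merely an element of the symmetric-group orbit of $f$; this is why for a general field $\F$ the random coordinate permutation must be supplemented by a random nonzero rescaling, although over $\F_2$ the rescaling is trivial and a permutation alone suffices. A secondary but important care point is keeping $(\phi_i, r_i)$ fixed across all $Q(n)$ examples within a single iteration so that $\A$'s guarantee, which applies to a single fixed target, can be invoked; varying it per-example would destroy the consistency of the simulated oracle.
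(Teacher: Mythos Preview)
Your proposal is correct and follows essentially the same approach as the paper: symmetrize $f$ via a random permutation together with a random nonzero coordinatewise rescaling so that the simulated target is uniform in $\Lin_n(\F,d(f))$, run $\A$ on this simulated oracle $\tau$ times, and average the outputs, using a union bound over the $\tau$ calls for the $\tau\delta'$ term and Hoeffding (with the outputs bounded by $n$) for the $\delta/2$ term. Your added remarks about holding $(\phi_i,r_i)$ fixed throughout an iteration and about why the rescaling is needed over general $\F$ are accurate and in the spirit of the paper's argument.
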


\begin{proof}
 Suppose $v\in (\F\backslash\{0\})^n$ is chosen uniformly at random and $\phi:[n]\to [n]$ is a uniformly random permutation. 
 If we run $\A$ with the target $f=\lambda_1x_{i_1}+\cdots+\lambda_dx_{i_d}$, and for every labeled example $(a,b)\in \F^n\times \F$, we modify the labeled example to $((v_1^{-1}a_{\phi^{-1}(1)},\ldots,v_n^{-1}\al a_{\phi^{-1}(n)}),b)$, then the new labeled examples remain uniform and consistent with the function $g(x)=\lambda_1v_{\phi(i_1)}x_{\phi(i_1)}+\cdots +\lambda_dv_{\phi(i_d)}x_{\phi(i_d)}$. This function $g$ is a uniformly random element of $\Lin(\F,d)$. 
 Using this fact, we show how to approximate~$\Psi_\A(d)$.

To this end, let $\tau=O((n^2/h^2)\log(1/\delta))$. 
We iterate $\tau$ times, and at each iteration, we choose a random uniform $v\in (\F\backslash \{0\})^n$ and random uniform permutation $\phi:[n]\to [n]$. 
We request for $Q(n)$ labeled examples and modify each labeled example $(a,b)\in \F^n\times \F$ to $((v_1^{-1}a_{\phi^{-1}(1)},\ldots,\al v_n^{-1} \al a_{\phi^{-1}(n)}),b)$. We then run $\A$ on these labeled examples. 
Let $D_i$ be the output of the $i$-th iteration. We then output
$\psi'=\left({\sum_{i=1}^\tau D_i}\right)/{\tau}.$

We now prove that, with probability at least $1-\delta/2-\tau\delta'$, we have $|\psi'-\Psi_\A(d(f))|\le h$. 
Since $\A(n)$ runs $\tau$ times, with probability at least $1-\tau\delta'$, all the seeds used by $\A$ are in $S(f)$ and $d(f)\le D_i\le \Delta(d(f))$. 
Also, since $\A(n)$ runs on a uniformly random function in $\Lin(\F,d)$, we have $\E[D_i]=\Psi_\A(d)$. 
By Hoeffding's bound, along with the fact that $D_i\le \Delta(d(f))\le n$, we can conclude that, with probability at least $1-\delta/2$, we have $|\psi'-\Psi_\A(d(f))|\le h$.
\end{proof}
Notice that in Lemma~\ref{SR-L1N}, $\tau$ also depends on $h$. As $h$ eventually will be $O(1/n)$ and $\delta'=1/(|\F|n^7)$, the success probability $1-\delta/2-\tau\delta'$ will be $1-o_n(1)$ for $\delta=1/n$.

We now show that in any large enough sub-interval of $[0,n]$, there is $k$ for which $\A$ can be used to learn $\Lin(\F,k)$.
\begin{lemma}\label{Existd}
    Let ${\cal A}(n)$ be an algorithm that runs in time $\allowbreak T(n)$, uses $Q(n)$ labeled examples of $f\in \Lin(\F)$ according to the uniform distribution in the presence of random classification noise, and, with probability at least $1-\delta'$, $\Delta$-approximates $d(f)$. 
    For every $1\le m\le \min\{j|\Delta(j)=n\}=\gamma^{-1}(\gamma^{-1}(n))$ there exists $m\le k\le \Delta(m)+1$ and 
    \begin{enumerate}
        \item\label{ExistdI1} An algorithm that, for every $f\in \Lin(\F,k)$, with probability at least $1-\delta/8-2\tau n\delta'$, where $\tau=O(n^4\log(1/\delta))$, identifies the relevant variables of $f$ from random uniform labeled examples in the presence of random classification noise. This algorithm runs in time $\tilde O(n^5)T(n)\log(1/\delta)$ and uses $\tilde O(n^4)Q(n)\log(1/\delta)$ labeled examples. 
        \item\label{ExistdI2} An algorithm that, with probability at least $1-\delta/2-|\F|kn\delta'$, properly learns $\Lin(\F,k)$, from random uniform labeled examples in the presence of random classification noise. This algorithm runs in time $\tilde O(|\F|kn^5)T(n)\al\log(1/\delta)$ and uses $\tilde O(n^4) Q(n)\log(|\F|/\delta)$ labeled examples.
    \end{enumerate}

    Such $k$ can be found in time $\tilde O(n^5)T(n)\log(1/\delta)$.
\end{lemma}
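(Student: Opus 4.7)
My plan has three stages: a global table of approximate $\Psi_\A$ values, an averaging argument to locate a good $k$, and two algorithms built from the local gap around $k$.

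First I invoke Lemma~\ref{TablePhi} with additive accuracy $h=\Theta(1/n)$ and confidence $\delta/8$ to precompute $\Psi'_\A(d)$ for every $d\in[n]$ in time $\tilde O(n^5)T(n)\log(1/\delta)$, using no labeled examples. From~(\ref{BoundD}) and monotonicity of $\Delta$ we have $\Psi_\A(m-1)\le\Delta(m-1)\le\Delta(m)$ and $\Psi_\A(\Delta(m)+1)\ge\Delta(m)+1$. Splitting the sum $\sum_{k=m}^{\Delta(m)}\bigl(\Psi_\A(k+1)-\Psi_\A(k-1)\bigr)$ by the parity of $k$, the two telescoping subsums combine to $\Psi_\A(\Delta(m)+1)+\Psi_\A(\Delta(m))-\Psi_\A(m)-\Psi_\A(m-1)\ge 1$. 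Over at most $n$ terms, pigeonhole produces $k\in[m,\Delta(m)+1]$ with $\Psi_\A(k+1)-\Psi_\A(k-1)\ge 1/n$. Scanning $\Psi'_\A$ for the largest pairwise increment locates such a $k$ in time $\tilde O(n^5)T(n)\log(1/\delta)$ and, after absorbing the $h$-error, delivers $\Psi_\A(k+1)-\Psi_\A(k-1)\ge 1/(2n)$.

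For Part~\ref{ExistdI1}, given the target $f\in\Lin(\F,k)$, for every $i\in[n]$ and $\lambda\in\F\setminus\{0\}$ I apply Lemma~\ref{SR-L1N} after the labeled-example transformation $(a,b)\mapsto(a,b+\lambda a_i)$, obtaining $\psi_i^{(\lambda)}$ within $h$ of $\Psi_\A(d(f+\lambda x_i))$. The combinatorial key is that $d(f+\lambda x_i)$ equals $k+1$ when $x_i$ is irrelevant in $f$, equals $k-1$ when $x_i$ is relevant in $f$ with coefficient $-\lambda$, and equals $k$ otherwise. Since $\Psi_\A(k-1)\le\Psi_\A(k+1)-1/(2n)$, thresholding $\min_{\lambda\ne 0}\psi_i^{(\lambda)}$ at $\Psi'_\A(k+1)-1/(4n)$ correctly classifies $x_i$; a union bound over all $(i,\lambda)$ pairs yields the claimed failure probability.

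For Part~\ref{ExistdI2} I extend the previous step by inspecting $\psi_{i_j}^{(\lambda)}$ at each identified relevant variable $x_{i_j}$ to recover the coefficient $\mu_{i_j}$: the unique $\lambda$ whose value is noticeably separated from the ``majority'' value $\Psi_\A(k)$ reveals $\mu_{i_j}=-\lambda$. The main obstacle is that our averaging only guarantees $\Psi_\A(k+1)-\Psi_\A(k-1)\ge 1/(2n)$ and not that $\Psi_\A(k-1)$ and $\Psi_\A(k)$ separate individually; when $\Psi'_\A(k-1)$ is noticeably below $\Psi'_\A(k)$, direct comparison identifies $\mu_{i_j}$, whereas in the complementary sub-case $\Psi'_\A(k-1)\approx\Psi'_\A(k)$ the $1/(2n)$ total gap forces $\Psi_\A(k+1)-\Psi_\A(k)\ge 1/(4n)$, and I would fall back on augmenting each candidate $f-\lambda x_{i_j}$ by $\nu x_j$ for a known-irrelevant $x_j$ so that the discrimination shifts onto the guaranteed $\Psi_\A(k)$-vs-$\Psi_\A(k+1)$ gap. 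Accumulating the $O(|\F|k)$ further applications of Lemma~\ref{SR-L1N} and running the union bound over the resulting $|\F|kn$ calls to $\A$ yields the stated time, sample, and confidence bounds.
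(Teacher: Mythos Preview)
Your averaging argument and the overall architecture are fine, and for $\F=\F_2$ your Part~\ref{ExistdI1} coincides with the paper's. The genuine gap is over non-binary fields: you probe $f+\lambda x_i$ for \emph{every} $\lambda\in\F\setminus\{0\}$ and threshold the minimum. That is logically correct, but the union bound over all $(i,\lambda)$ pairs gives failure probability $1-\delta/8-(|\F|-1)\tau n\delta'$ and time $\tilde O(|\F|n^5)T(n)\log(1/\delta)$, not the $2\tau n\delta'$ and $\tilde O(n^5)T(n)\log(1/\delta)$ stated in the lemma. The extra $|\F|$ factor is exactly what Part~\ref{ExistdI1} is supposed to avoid, so your proposal does not establish the lemma as written.

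The paper eliminates this factor by using, for $|\F|>2$, a different gap: it pigeonholes on the \emph{single-step} telescoping sum to obtain $\Psi_\A(k+1)-\Psi_\A(k)\ge 1/n$. Then for each $i$ one only needs \emph{two} multipliers, $1$ and some fixed $\alpha\in\F\setminus\{0,1\}$: if $x_i$ is irrelevant, both $f+x_i$ and $f+\alpha x_i$ lie in $\Lin(\F,k+1)$; if $x_i$ is relevant, at most one of the two can cancel $x_i$, so at least one lies in $\Lin(\F,k)$. One therefore distinguishes $\Psi_\A(k)$ from $\Psi_\A(k+1)$ with two calls per variable, giving exactly the $2\tau n\delta'$ term. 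Your approach, based only on the $\Psi_\A(k+1)-\Psi_\A(k-1)$ gap, cannot do this because hitting $\Psi_\A(k-1)$ requires finding the exact cancelling multiplier.

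For Part~\ref{ExistdI2} the paper's argument is also simpler than your two-case analysis: having the $\Psi_\A(k+1)-\Psi_\A(k)$ gap in hand, it tests $f-\alpha x_i+x_{k+1}$ for all $\alpha\in\F$, where $x_{k+1}$ is a known irrelevant variable; this function lies in $\Lin(\F,k)$ iff $\alpha$ is the coefficient of $x_i$, and in $\Lin(\F,k+1)$ otherwise. Your fallback (``augment by $\nu x_j$'') is essentially this same trick, but you only reach it conditionally, whereas the paper uses it uniformly because it already secured the single-step gap.
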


\begin{proof}
    We first prove the result when the field is not the binary field. Let $m$ be any integer such that $1\le m\le \min\{j|\Delta(j)=n\}$. Since by (\ref{BoundD}),
    \begin{eqnarray*}
        \sum^{\Delta(m)}_{i=m}\Psi_\A(i+1)-\Psi_\A(i)&=&\Psi_\A\left(\Delta(m)+1\right)-\Psi_\A\left(m\right)\\
        &\ge& \Delta(m)+1-\Delta(m)=1,
    \end{eqnarray*} 
there is $k$ such that $m\le k\le \Delta(m)$ and
$$\Psi_\A(k+1)-\Psi_\A(k)\ge \frac{1}{\Delta(m)-m+1}\ge  \frac{1}{n}.$$
    
First, we find such $k$. By Lemma~\ref{TablePhi}, taking $h=1/(16n)$, with probability at least $1-\delta/4$, we can find $k$ such that $\Psi_\A(k+1)-\Psi_\A(k)\ge 7/(8n)$ in time $\tilde O(n^5)T(n)\log(1/\delta)$.

We now present an algorithm that learns $\Lin(\F,k)$. 
The algorithm uses the algorithm in Lemma~\ref{SR-L1N} to approximate $\Psi_\A(d(f+x_i))$ and $\Psi_\A(d(f+\alpha x_i))$ for some $\alpha\in \F\backslash\{0,1\}$ and all $i\in[n]$ with an additive error of $1/(8n)$. 
If $x_i$ is not a relevant variable of the target function $f$, then both $f+x_i$ and $ f+\alpha x_i$ are in $\Lin(\F,k+1)$. Consequently, we obtain two values in the inteval $[\Psi_\A(k+1)-1/(8n),\Psi_\A(k+1)+1/(8n)]$. 
If $x_i$ is a relevant variable in the function, then one of the functions, either $f+x_i$ or $f+\alpha x_i$ is in $\Lin(\F,k)$, and therefore, one of the values is in the inteval $[\Psi_\A(k)-1/(8n),\Psi_\A(k)+1/(8n)]$.
Since $\Psi_\A(k)+1/(8n)<\Psi_\A(k+1)-1/(8n)$, the intervals are disjoint, and thus we can distinguish between the two cases. 

By Lemma~\ref{SR-L1N}, with probability $1-\delta/2-\tau\delta'$, we can approximate each $\Psi_\A(d(f+x_i))$ (or $\Psi_\A(d(f+\alpha x_i))$) with an additive error $h=1/(8n)$ in time $\tau T(n)$ and $\tau Q(n)$ labeled examples, where $\tau=O(n^4\log(1/\delta))$. Taking $\delta/(8n)$ for $\delta$, with probability $1-\delta/8-2\tau n\delta'$, we can approximate all $\Psi_\A(d(f+x_i))$ and $\Psi_\A(d(f+\alpha x_i))$, $i\in [n]$ with an additive error $h=1/(8n)$ in time $\tau' nT(n)$ and $\tau' Q(n)$ labeled examples where $\tau'=O(n^4\log(n/\delta))$.
This completes the proof of item~\ref{ExistdI1} for the case where the field is not the binary field.

To prove item~\ref{ExistdI2}, suppose, without loss of generality, that $x_1,\ldots,x_k$ are the relevant variables in $f$.
We approximate $\psi_{\alpha,i}:=\Psi_\A(d(f-\alpha x_i+x_{k+1}))$ for all $\alpha\in \F$ and for every~$i\in [n]$. 
The result follows from the fact that $\psi_{\alpha,i}=\Psi_\A(k)$ if and only if the coefficient of $x_i$ is $\alpha$. Otherwise, $\psi_{\alpha,i}=\Psi_\A(k+1)$. 
By Lemma~\ref{SR-L1N}, to approximate all $\psi_{\alpha,i}$, with success probability of $1-\delta/4-|\F|kn\delta'$, we need time $O(|\F|kn^5T(n)\log(|\F|n/\delta))$ and $O(n^4 T(n)\log(|\F|n/\delta))$ labeled examples. This completes the proof of item~\ref{ExistdI2} for the case where the field is not the binary field. 

Similar to the approach described above, for the binary field, we can show that there exists a $k$ such that $\Psi_\A(k+1) - \Psi_\A(k-1) \ge 1/n$. 
Then, use the algorithm described in Lemma~\ref{SR-L1N} to approximate $\Psi_\A(d(f+x_i))$ for all $i \in [n]$. 
If $x_i$ is not a relevant variable of the target function $f$, then $\Psi_\A(d(f+x_i))\in [\Psi_\A(k+1)-1/(8n),\Psi_\A(k+1)+1/(8n)]$. 
If $x_i$ is a relevant variable in $f$, then $\Psi_\A(d(f+x_i))\in[\Psi_\A(k-1)-1/(8n),\Psi_\A(k-1)+1/(8n)]$. Since both intervals are disjoint, we obtain the desired result.\footnote{Another approach is to utilize the fact that $\Psi_\A(k)-\Psi_\A(k-1) \geq 1/n$, and for every $i$, approximate $\Psi_\A(g_i)$, where $g_i = f(x_1, \ldots, x_{i-1}, 0, x_{i+1}, \ldots, x_n)$, using all labeled examples $(a, b)$ that satisfy $a_i = 0$.} 
\end{proof}
Notice that in item~\ref{ExistdI2}, the success probability $1-\delta/2-|\F|kn\delta'$, and the time complexity depends on $|\F|$. We now present an alternative algorithm for finding the coefficients of the linear function, given that the algorithm knows the relevant variables.  

\begin{lemma}\label{PolyLessLog}
   Let $\A$ be an algorithm that, for every $f\in \Lin(\F,k)$, runs in time $T$, uses~$Q$ random uniform labeled examples in the presence of random classification noise, and identifies the relevant variables of $f$. 
   Then
    there is an algorithm that properly learns $\Lin(\F,k)$ in time $T+\tilde O((n^3/(1-\eta_b)^{k}+n/(1-\eta_b-1/|\F|)^2)\log(1/\delta))$ and uses $Q+O(((1/(1-\eta_b)^{k}+n/(1-\eta_b-1/|\F|)^2)\log(1/\delta))$ random uniform labeled examples in the presence of random classification noise.
\end{lemma}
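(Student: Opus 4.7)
The plan is a two-phase reduction. In Phase~1, I invoke the hypothesized algorithm $\A$ to identify the set $I=\{i_1,\ldots,i_k\}$ of relevant variables of the target $f(x)=\sum_{j=1}^k\lambda_j x_{i_j}$, at a cost of $T$ time and $Q$ labeled examples (boosting the success probability to $1-\delta/3$ if needed via Lemma~\ref{BoostLe}). Phase~2 then recovers the $k$ nonzero coefficients $\lambda_1,\ldots,\lambda_k\in\F\setminus\{0\}$.

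The key observation for Phase~2 is that any $k$ noise-free labeled examples whose projections to $I$ are linearly independent over $\F$ determine $\lambda_1,\ldots,\lambda_k$ uniquely via Gaussian elimination on the $k\times k$ projected system. Since noise-free examples cannot be detected directly, candidates are generated by randomization. I draw a single pool of $m=O(\log(1/\delta)/(1-\eta_b)^k)$ labeled examples and then repeatedly sample random $k$-subsets from this pool; for each subset, I project its examples to the coordinates in $I$, and if the resulting $k\times k$ matrix over $\F$ is invertible I take the unique solution of the linear system as a candidate coefficient vector. Two facts drive the analysis: the expected fraction of noise-free examples in the pool is at least $1-\eta_b$, so a random $k$-subset is simultaneously noise-free with probability $\ge(1-\eta_b)^k(1-o(1))$; and conditional on being noise-free the projected matrix is uniformly distributed in $\F^{k\times k}$, hence invertible with probability $\prod_{i\ge 1}(1-|\F|^{-i})=\Omega(1)$. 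Consequently $O(\log(1/\delta)/(1-\eta_b)^k)$ random subsets suffice so that with probability $\ge 1-\delta/3$ at least one produces the true coefficient vector; each costs $O(k^3)\le O(n^3)$ time for Gaussian elimination and uses no new samples, contributing the $\tilde O(n^3\log(1/\delta)/(1-\eta_b)^k)$ time and $O(\log(1/\delta)/(1-\eta_b)^k)$ sample terms of the stated bound.

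This produces a candidate set $C\subseteq\Lin(\F,k)$ that, with probability $\ge 1-\delta/3$, contains $f$. I then apply Lemma~\ref{LearnC} to $C$ using a fresh, independent pool of labeled examples to pick $f$ with probability $\ge 1-\delta/3$; this contributes the $\tilde O(n\log(1/\delta)/(1-\eta_b-1/|\F|)^2)$ time and sample term. A union bound over the three success events (correct identification of $I$, $f\in C$, and correct selection by Lemma~\ref{LearnC}) yields overall success probability $\ge 1-\delta$, and summing the three budgets gives the claimed complexity.

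The main technical step I expect to require care is the conditional analysis of a random $k$-subset: establishing both the $(1-\eta_b)^k(1-o(1))$ lower bound on the probability of being simultaneously noise-free (via a Chernoff bound on the noise-free fraction of the pool) and the $\Omega(1)$ lower bound on the invertibility of a uniformly random $k\times k$ matrix over $\F$ (via the standard count $\prod_{i=0}^{k-1}(|\F|^k-|\F|^i)$ of invertible matrices). Once these facts are in place, the remaining ingredients -- concentration for the subset-sampling success and Lemma~\ref{LearnC} for selecting $f$ from $C$ -- are routine.
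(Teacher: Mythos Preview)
Your proposal is correct and follows essentially the same approach as the paper: run $\A$ to identify the relevant variables, repeatedly solve random $k\times k$ projected linear systems hoping all $k$ examples are simultaneously noise-free (which happens with probability $\ge (1-\eta_b)^k$) and the matrix is invertible (which happens with probability $\prod_{i\ge 1}(1-|\F|^{-i})\ge 1/4$), collect the resulting candidates, and then invoke Lemma~\ref{LearnC} to select the target among them.

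The only difference is a minor implementation detail: the paper draws $k$ \emph{fresh} labeled examples at each iteration, so the iterations are independent and the analysis is a one-line application of the two probability bounds above. You instead draw a single pool of size $m=O(\log(1/\delta)/(1-\eta_b)^k)$ and resample $k$-subsets from it. This works, but your claimed $(1-\eta_b)^k(1-o(1))$ bound for a random subset being noise-free is slightly optimistic: Chernoff on the pool only guarantees a noise-free fraction of $(1-\eta_b)-\epsilon$ with $\epsilon=\Theta(\sqrt{\log(1/\delta)/m})$, which after raising to the $k$th power yields a constant-factor (not $1-o(1)$) lower bound. Since only an $\Omega((1-\eta_b)^k)$ bound is needed, this is harmless, but the fresh-sample version avoids the issue entirely and keeps the iterations independent.
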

\begin{proof}
    We run $\A$ to find the relevant variables.  
    The algorithm that finds the coefficients iterates $O((1/(1-\eta_b)^k)\log(1/\delta))$ times. At each iteration, it requests $k$ labeled examples and uses Gaussian elimination to find the coefficients. Then, it tests whether the output function matches the target. If not, it proceeds to the next iteration.

    Since $\eta=\eta_b$, the probability that all the labeled examples are correct is $(1-\eta_b)^k$. If $|\F|=q$, the probability that $k$ random entries in the examples form a non-singular matrix is
    $$\left(1-\frac{1}{q^k}\right)\left(1-\frac{1}{q^{k-1}}\right)\cdots\left(1-\frac{1}{q}\right)\ge \frac{1}{4}.$$
    Therefore, after $t=O((1/(1-\eta_b)^k)\log(1/\delta))$ iterations, with probability at least $1-\delta/3$, at least one of the outputs is the target. By Lemma~\ref{LearnC}, learning the target from a set of $t$ linear functions with a success probability of $1-\delta/3$ requires $\tilde O((1/(1-\eta_b-1/q)^2)(k+\log(1/\delta))$ labeled examples.  
\end{proof}

We are now ready to prove Theorem~\ref{TH1}. 
\begin{proof}
Let $\A'$ be an algorithm that runs in polynomial time, uses labeled examples according to the uniform distribution in the presence of random classification noise, and outputs $\gamma^{-1}(d(f))\le D\le \gamma(d(f))$. 
Modify the algorithm to output $\min(\gamma(D),n)$. The algorithm now is a $\Delta$-approximation algorithm, where $\Delta(x)=\min(\gamma(\gamma(x)),n)$. 
Let $m(n)=\gamma^{-1}(\gamma^{-1}(\pi( n)))$. 
Since $\gamma:{\mathbb R}^+\to {\mathbb R}^+$ is strictly increasing and is defined for all ${\mathbb R}^+$, we have $\gamma^{-1}:{\mathbb R}^+\to {\mathbb R}^+$, is also strictly increasing, and $m(n)=\omega_n(1)$. 
Let $\delta'=1/(|\F|n^7)$ and $\delta=1/n$. By Lemma~\ref{Existd}, item~\ref{ExistdI1}, there exists $m(n)\le k(n)\le \Delta(m(n))=\pi(n)$, and an algorithm that, for every $f\in \Lin(\F,k(n))$, with probability at least $1-o_n(1)>2/3$, identifies the relevant variables of $f$ from random uniform labeled examples in the presence of random classification noise. 
Also, this algorithm runs in polynomial time. Since $k(n)\le \pi(n)$, by Lemma~\ref{PolyLessLog} and item~2 in Lemma~\ref{Existd}, there is a $poly(n,1/(1-\eta_b-1/|\F|),\min(|\F|,1/(1-\eta_b)^{\pi(n)}))$ time learning algorithm for $\Lin(\F,k)$.
Since $k(n)\ge m(n)=\omega_n(1)$, the result follows.
\end{proof}

\subsection{Approximation Implies Learning $\Lin(\F)$}
In this section, we prove.
\begin{theorem}\label{TH2}
    Let $\gamma:{\mathbb R}^+\to {\mathbb R}^+$ be any strictly increasing function, where $\gamma(x)>x$ for every~$x$. Let $\Gamma(x):=\gamma(\gamma(x))$. 
    Consider any $T(n)$-time algorithm $\A'(n)$ that, for any linear function $f\in \Lin(\F)$, uses $Q(n)$ random uniform labeled examples of $f$ in the presence of random classification noise and, with probability at least $2/3$, returns a $\gamma$-approximation of the number of relevant variables $d(f)$ of $f$.
    From $\A'(n)$, one can, in polynomial time, construct a $\tilde  O(|\F|\Gamma(n)^{12})T(O(\Gamma(n)^2))\log(1/\delta)$-time algorithm that properly learns  $\Lin(\F)$ from $\tilde  O(\Gamma(n)^8)Q(O(\Gamma(n)^2))\log(|\F|/\delta)$ random uniform labeled examples in the presence of random classification noise.
\end{theorem}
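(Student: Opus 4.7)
The plan has three ingredients that I would combine in sequence: first, build a finite chain of ``learnable'' sparsities covering $[1,n]$ by iterating Lemma~\ref{Existd}; second, reduce learning of any $d$-sparse target to learning at one of the chain values by a random linear-shift trick, which only succeeds when $d$ is small relative to the number of variables; third, make this ``small'' constraint vacuous by padding the target with dummy variables, paying a quadratic blow-up in the number of variables.

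\emph{Chain of learnable sparsities.} I would set $N := \Gamma(n)^2$, boost $\A'$ to confidence $1-\delta'$ with $\delta' = 1/(|\F| N^7)$ via Lemma~\ref{BoostA}, and replace each output $D$ by $\min(\gamma(D),N)$, so that the resulting algorithm $\A$ is a $\Delta$-approximator on $N$-variable linear functions with $\Delta(x) = \min(\Gamma(x),N)$. Iterating Lemma~\ref{Existd} starting at $m_1 = 1$ and then $m_{i+1} = k_i+1$ produces integers $1 \le k_1 < k_2 < \cdots < k_t$ with $k_{i+1} \le \Delta(k_i+1)+1 \le \Gamma(k_i)+2$ and $k_t \ge \Gamma^{-1}(N) = n$; each $k_i$ is located using the table of Lemma~\ref{TablePhi}, and item~\ref{ExistdI2} of Lemma~\ref{Existd} yields a proper learner for $\Lin_N(\F,k_i)$.

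\emph{Shift reduction plus padding.} Given a target $f$ on $n$ variables, I would first append $N-n$ fresh uniform elements of $\F$ to each example to view $f$ as a function on $N$ variables at the same noise rate. For $d := d(f) \le n$, let $j$ be the unique index with $k_{j-1} < d \le k_j$ (take $k_0 = 0$); then $k_j \le \Gamma(k_{j-1})+2 \le \Gamma(d)+2 \le \sqrt{N}+2$. Next, sample $k_j - d$ uniformly random distinct indices $i_1,\dots,i_{k_j-d} \in [N]$ and uniformly random nonzero coefficients $\alpha_\ell$, and transform each noisy labeled example $(a,b)$ of $f$ into $(a,\,b+\sum_\ell \alpha_\ell a_{i_\ell})$, which is a noisy labeled example at the same noise rate of $g(x) := f(x)+\sum_\ell \alpha_\ell x_{i_\ell}$. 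The probability that all chosen $i_\ell$ miss the $d$ relevant variables of $f$ is
\[
\prod_{\ell=0}^{k_j - d - 1} \frac{N - d - \ell}{N - \ell} \ \ge\ \exp\bigl(-O(d(k_j-d)/N)\bigr) \ \ge\ \exp(-O(1)),
\]
since $d(k_j-d) \le n\cdot(\sqrt{N}+2) = O(N)$. When this event holds, $g \in \Lin_N(\F,k_j)$, so the $k_j$-learner from the previous step returns $g$ and I recover $f = g - \sum_\ell \alpha_\ell x_{i_\ell}$. A constant number of repetitions boosts success to $1-\delta$. Because $d$ is unknown, I would run the reduction for every $j \in [t]$, collect the $O(n)$ candidates, and select the hypothesis closest to the target via Lemma~\ref{LearnC}.

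\emph{Main obstacle and bookkeeping.} The crux is the probability bound in the shift reduction: it is bounded below by a constant only when $d(k_j - d) = O(N)$, which forces $d \le \Gamma^{-1}(\sqrt{N})$ and therefore the blow-up $N = \Gamma(n)^2$; this is the sole source of the $T(\Gamma(n)^2)$ factor in the theorem's runtime. The remaining technical care is a single union bound accounting for the chain length $t \le n$, the Hoeffding approximations inside each invocation of Lemmas~\ref{TablePhi}, \ref{SR-L1N}, and~\ref{Existd}, and the repeated shift trials; choosing $\delta'$ polynomially small in $N$ suffices to fit everything within $\delta$, and the final complexities follow by multiplying through the $\text{poly}(N)$ overhead of each step.
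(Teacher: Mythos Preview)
Your proposal follows essentially the paper's own route: Lemma~\ref{Existd} supplies, for each $m\le\Gamma^{-1}(N)$, a learnable sparsity $k\in[m,\Gamma(m)+1]$; your random-shift reduction is exactly Lemma~\ref{LessQ}; and the padding to $N=\Theta(\Gamma(n)^2)$ is Lemma~\ref{ALLn}, after which one runs over all candidate $d$ and selects via Lemma~\ref{LearnC}. The only slip is the bound $\Delta(k_i{+}1)+1\le\Gamma(k_i)+2$, which should be $\Gamma(k_i{+}1)+1$; however, since $k_{j-1}+1\le d$ you still get $k_j\le\Gamma(d)+1\le\sqrt{N}+1$, so the shift-success probability and the rest of the argument are unaffected.
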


We first show that for every $d$ satisfying $12 \Gamma(d)^2\le n$, there exists a learning algorithm for $\Lin(\F,d)$.
\begin{lemma}\label{LessQ}
    Suppose that for every $1\le m\le m':=\Gamma^{-1}(n)$, there exists a $k$ such that $m\le k\le \Gamma(m)$, and an algorithm that runs in time $T(n)$ and, with probability at least $2/3$, properly learns $f\in \Lin(\F,k)$ under the uniform distribution in the presence of random classification noise, and uses $Q(n)$ labeled examples. 
    Then, for every $d$ such that $12\Gamma(d)^2\le n$, there is an algorithm that runs in time $O(T(n)\log (1/\delta))$ and properly learns $\Lin(\F,d)$ under the uniform distribution in the presence of random classification noise, using $O(Q(n)\al \log (1/\delta))$ labeled examples.
\end{lemma}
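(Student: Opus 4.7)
The plan is to reduce learning $\Lin(\F,d)$ to learning $\Lin(\F,k)$ for a suitable $k\ge d$, by padding the target with ``extra'' variables that are, with high probability, not relevant to it. Given $d$ with $12\Gamma(d)^2\le n$, we have $\Gamma(d)\le \sqrt{n/12}\le n$, hence $d\le \Gamma^{-1}(n)=m'$; applying the hypothesis with $m=d$ yields some $k$ with $d\le k\le \Gamma(d)$ together with an algorithm $\A_k$ that, with probability at least $2/3$, properly learns $\Lin(\F,k)$ in time $T(n)$ using $Q(n)$ labeled examples.

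The reduction itself is straightforward. Draw a uniformly random subset $I=\{i_1,\ldots,i_{k-d}\}$ of $[n]$ of size $k-d$, and simulate $\A_k$ on the labeled examples $(a,\,b+a_{i_1}+\cdots+a_{i_{k-d}})$ obtained from the noisy examples $(a,b)$ of the target $f\in \Lin(\F,d)$. These are labeled examples of $g(x)=f(x)+x_{i_1}+\cdots+x_{i_{k-d}}$, with the same noise rate (adding a fixed field element to the label preserves random classification noise). If none of the $i_\ell$ is a relevant variable of $f$, then $g\in \Lin(\F,k)$; in that case $\A_k$ outputs $g$ with probability at least $2/3$, and we recover $f=g-(x_{i_1}+\cdots+x_{i_{k-d}})$.

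The key probabilistic step, which is the only place the hypothesis $12\Gamma(d)^2\le n$ is used, is to lower bound the probability that $I$ avoids the $d$ relevant variables of $f$. Since $I$ is a uniform $(k-d)$-subset of $[n]$, this probability equals $\binom{n-d}{k-d}/\binom{n}{k-d}\ge \bigl(1-d/(n-k)\bigr)^{k-d}\ge 1-(k-d)d/(n-k)$ by Bernoulli's inequality. Since $k\le \Gamma(d)$, we have $(k-d)d\le \Gamma(d)^2\le n/12$, and since $k\le \sqrt{n/12}\le n/2$ we have $n-k\ge n/2$, so the probability is at least $1-(n/12)/(n/2)=5/6$. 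Combined with the $2/3$ success probability of $\A_k$, one run of the reduction returns the correct $f$ with probability at least $(5/6)(2/3)=5/9>1/2$.

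To amplify the success probability to $1-\delta$, repeat the reduction $O(\log(1/\delta))$ times with independent draws of $I$ and output the candidate for $f$ that appears most often; a standard Chernoff bound exploiting the $5/9-1/2$ bias shows the mode equals the target with probability at least $1-\delta$, exactly as in the proof of Lemma~\ref{BoostLe}. The resulting algorithm runs in time $O(T(n)\log(1/\delta))$ and uses $O(Q(n)\log(1/\delta))$ labeled examples, giving the conclusion. The substantive content is the probabilistic estimate above; the rest is routine simulation of $\A_k$ and majority boosting.
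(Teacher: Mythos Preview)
Your proof is correct and follows essentially the same approach as the paper: reduce learning $\Lin(\F,d)$ to learning $\Lin(\F,k)$ by padding with $k-d$ randomly chosen variables, bound the probability that these avoid the relevant variables of $f$ (you obtain $5/6$ where the paper obtains $11/12$, via a slightly different but equivalent estimate), conclude that a single run succeeds with probability greater than $1/2$, and then boost via Lemma~\ref{BoostLe}. The only cosmetic difference is that you multiply the two success probabilities while the paper uses a union bound; both are valid here.
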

\begin{proof} 
    Let $d$ be an integer such that $12\Gamma(d)^2<n$. 
    Since $\Gamma(d)<n$, we have $d\le m'$. Consequently, there exists $k$ such that $d\le k\le \Gamma(d)\le \Gamma(m')=n$, along with a proper learning algorithm $\B(n,k)$ for $\Lin(\F,k)$, that runs in time $T(n)$ and uses $Q(n)$ labeled examples. 
    
    We now present an algorithm for learning $\Lin(\F,d)$. 
    We uniformly at random draw $k-d$ variables $x_{i_1},\ldots,x_{i_{k-d}}$ and run the algorithm $\B(n,k)$. 
    For each labeled example $(a,b)$ of $f$, we feed $\B$ with the labeled example $(a,b+a_{i_1}+\cdots+a_{i_{k-d}})$. 
    This modified labeled example serves as a labeled example for the function $g=f+x_{i_1}+\cdots+x_{i_{k-d}}$. 
    The probability that $g\in \Lin(\F,k)$ is the probability that none of the variables $x_{i_1},\ldots,x_{i_{k-d}}$ are relevant in $f$. This probability is given by 
    $$\prod_{i=d}^k\left(1-\frac{i}{n}\right)\ge 1-\frac{k^2}{n}\ge 1-\frac{\Gamma(d)^2}{n}\ge \frac{11}{12}.$$
    
    Therefore, with probability at least $1-(1/3+1/12)>1/2$, algorithm $\B(n,k)$ learns $g$ and thus learns~$f$. 
    By Lemma~\ref{BoostLe}, the result follows.
\end{proof}

We now show how to construct a learning algorithm for $\Lin(\F,d)$ for every $d\le n$.

\begin{lemma}\label{ALLn}
    Suppose that for every $n$ and every $d$ that satisfies $12\Gamma(d)^2\le n$, there is an algorithm $\A(n)$ that runs in time $T(n)$ and, with probability at least $2/3$, properly learns $f\in \Lin(\F,d)$ under the uniform distribution in the presence of random classification noise, using $Q(n)$ labeled examples. 
    Let $N(n)=12\Gamma(n)^2$. 
    Then, for every $n$ and every $d\le n$, there is an algorithm that runs in time $T(N(n))$ and, with probability at least $2/3$, properly learns $f\in \Lin(\F,d)$ under the uniform distribution in the presence of random classification noise, using $Q(N(n))$ labeled examples.
\end{lemma}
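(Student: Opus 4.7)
The plan is a simple padding reduction: given an arbitrary $d\le n$, I will view the target $f\in \Lin_n(\F,d)$ as a linear function on $N(n)=12\Gamma(n)^2$ variables by adjoining $N(n)-n$ dummy variables, and then invoke the hypothesized algorithm with parameter $N(n)$ rather than $n$. Since $d\le n$, the monotonicity of $\gamma$ (and hence of $\Gamma$) yields $12\Gamma(d)^2\le 12\Gamma(n)^2=N(n)$, so the hypothesis of the lemma applies to the pair $(N(n),d)$, and the hypothesized algorithm $\A(N(n))$ runs in time $T(N(n))$ using $Q(N(n))$ labeled examples.

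The only thing to verify carefully is that we can faithfully simulate random uniform labeled examples of the padded function from random uniform labeled examples of $f$, preserving both the uniform distribution on inputs and the noise rate. Concretely, on receiving a noisy labeled example $(a,b)\in \F^n\times \F$ of $f$, I draw $a'\in \F^{N(n)-n}$ independently and uniformly at random and feed $((a,a'),b)$ to $\A(N(n))$. The concatenated input $(a,a')$ is uniform over $\F^{N(n)}$; the label $b$ depends only on $a$ through $f$, so $(a,a')$ is a uniform labeled example of the function $\tilde f\in \Lin_{N(n)}(\F)$ defined by $\tilde f(y_1,\ldots,y_{N(n)})=f(y_1,\ldots,y_n)$; and the noise on $b$ is unchanged, so it obeys the random classification noise model with the same noise rate $\eta$. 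Crucially, $\tilde f$ has exactly the same set of relevant variables as $f$, so $\tilde f\in \Lin_{N(n)}(\F,d)$.

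Because $12\Gamma(d)^2\le N(n)$, the hypothesized algorithm properly learns $\tilde f$ with probability at least $2/3$, outputting an exact description of $\tilde f$ as a linear function on $N(n)$ variables. To recover $f$, I simply read off the coefficients of $\tilde f$ on the first $n$ coordinates (the coefficients on the last $N(n)-n$ coordinates are necessarily zero, since those variables are irrelevant in $\tilde f$). The time and sample complexity are exactly $T(N(n))$ and $Q(N(n))$, as required.

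There is no real obstacle here; the main point to check is simply that padding with i.i.d.\ uniform dummy coordinates preserves both the uniform input distribution on $\F^{N(n)}$ and the random classification noise process, and that the padded target has the same sparsity as the original.
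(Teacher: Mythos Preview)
Your proof is correct and follows essentially the same padding reduction as the paper: use monotonicity of $\Gamma$ to get $12\Gamma(d)^2\le N(n)$, append $N(n)-n$ uniform dummy coordinates to each labeled example, and invoke $\A(N(n))$. You are simply more explicit than the paper about why the padded examples are uniform, why the noise is preserved, and how to read off $f$ from $\tilde f$.
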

\begin{proof} 
    Let $N=N(n)$. Then for every $d\le n$, we have $12\Gamma(d)^2\le 12\Gamma(n)^2= N(n)$. We run $\A(N)$. Whenever the algorithm requests a labeled example, we draw a labeled example $(a,b)\in \F^n\times \F$, append $N-n$ random uniform entries to $a$, creating~$a'$. We then provide $(a',b)$ to $\A(N)$. The algorithm is effective for any $d$ that satisfies $12\Gamma(d)^2\le N=12\Gamma(n)^2$ and, thereby, covers all $d\le n$.
\end{proof}

Theorem~\ref{TH2} now follows from Lemma~\ref{Existd}, \ref{LessQ} and~\ref{ALLn}.
\\
\\

{\bf Acknowledgements:}{We would like to thank the anonymous reviewer of RANDOM for providing another approach for finding the relevant variables in the target function. We also extend our gratitude to the other reviewers for their useful comments and suggestions, which have greatly improved this manuscript.}

\bibliography{TestingRef}

\end{document}